\documentclass{article}

\usepackage{microtype}
\usepackage{graphicx}
\usepackage{subfigure}
\usepackage{booktabs} %

\usepackage{hyperref}

\usepackage[preprint]{icml2026}

\usepackage{amsmath}
\usepackage{amssymb}
\usepackage{mathtools}
\usepackage{amsthm}

\usepackage[capitalize,noabbrev]{cleveref}

\theoremstyle{plain}
\newtheorem{theorem}{Theorem}[section]
\newtheorem{proposition}[theorem]{Proposition}
\newtheorem{lemma}[theorem]{Lemma}
\newtheorem{corollary}[theorem]{Corollary}
\theoremstyle{definition}
\newtheorem{definition}[theorem]{Definition}
\newtheorem{assumption}[theorem]{Assumption}
\theoremstyle{remark}

\usepackage[textsize=tiny]{todonotes}

\usepackage{hyperref}
\usepackage{url}
\usepackage{wrapfig}

\newcommand{\edit}[1]{\textbf{\color{blue}}}

\crefname{section}{Sec.}{Secs.}
\crefname{appendix}{App.}{Apps.}
\crefname{equation}{Eq.}{Eqs.}
\crefname{figure}{Fig.}{Figs.}
\crefname{proposition}{{Prop.}}{Props.}
\crefname{corollary}{Cor.}{Cors.}

\usepackage{pifont}

\usepackage{xspace}
\usepackage{enumitem}
\usepackage{algorithm, algorithmic}
\usepackage{multirow}
\usepackage{wrapfig}  
\newcommand{\algname}{UMPIRE\xspace}

\newcommand{\cu}{NC\xspace}
\newcommand{\lne}{LN-Ent\xspace}
\newcommand{\se}{Sem.Ent\xspace}
\newcommand{\eigen}{Eigen\xspace}

\newcommand{\task}{t}
\newcommand{\taskspace}{\mathcal{T}}
\newcommand{\img}{I}
\newcommand{\query}{q}
\newcommand{\ans}{y}

\newcommand{\ansset}{\mathcal{Y}}
\newcommand{\model}{\mathcal{M}}
\newcommand{\metric}{v}
\newcommand{\method}{V}
\newcommand{\unadjmethod}{U}
\newcommand{\quadent}{Q}
\newcommand{\acc}{a}

\newcommand{\inc}{c}

\newcommand{\llava}{Llava-v1.5-13b~}
\newcommand{\modelphi}{Phi-4-multimodal-instruct~}
\newcommand{\llavavideo}{LLaVA-NeXT-Video-7b-hf~}

\usepackage{cleveref}
\usepackage{makecell}
\usepackage[table, dvipsnames]{xcolor}

\usepackage{caption}

\icmltitlerunning{Uncertainty Quantification for MLLMs with Incoherence-adjusted Semantic Volume}

\begin{document}

\twocolumn[
\icmltitle{Uncertainty Quantification for Multimodal Large Language Models with Incoherence-adjusted Semantic Volume}

\icmlsetsymbol{equal}{*}
\icmlsetsymbol{code}{$\star$}

\begin{icmlauthorlist}
\icmlauthor{Gregory Kang Ruey Lau}{equal,soc,cnrs}
\icmlauthor{Hieu Dao}{equal,soc}
\icmlauthor{Nicole Kan Hui Lin}{soc}
\icmlauthor{Bryan Kian Hsiang Low}{soc}

\end{icmlauthorlist}

\icmlaffiliation{soc}{Department of Computer Science, National University of Singapore}
\icmlaffiliation{cnrs}{CNRS@CREATE, 1 Create Way, \#08-01 Create Tower, Singapore 138602}
    
\icmlcorrespondingauthor{Gregory Lau}{greglau@comp.nus.edu.sg}
\icmlcorrespondingauthor{Code:}{\url{https://github.com/daohieu17ctt/UMPIRE}}

\icmlkeywords{Machine Learning, ICML}

\vskip 0.3in
]

\printAffiliationsAndNotice{\icmlEqualContribution} %

\begin{abstract}

Despite their capabilities, Multimodal Large Language Models (MLLMs) may produce plausible but erroneous outputs, hindering reliable deployment. Accurate uncertainty metrics could enable escalation of unreliable queries to human experts or larger models for improved performance. However, existing uncertainty metrics have practical constraints, such as being designed only for specific modalities, reliant on external tools, or computationally expensive. We introduce \algname, a training-free uncertainty quantification framework for MLLMs that works efficiently across various input and output modalities without external tools, relying only on the models' own internal modality features. \algname computes the incoherence-adjusted semantic volume of sampled MLLM responses for a given task instance, effectively capturing both the global semantic diversity of samples and the local incoherence of responses based on internal model confidence. We propose uncertainty desiderata for MLLMs and provide theoretical analysis motivating \algname's design. Extensive experiments show that \algname consistently outperforms baseline metrics in error detection and uncertainty calibration across image, audio, and video-text benchmarks, including adversarial and out-of-distribution settings. We also demonstrate \algname’s generalization to non-text output tasks, including image and audio generation.

\end{abstract}

\section{Introduction}

The capabilities of Multimodal Large Language Models (MLLMs) have expanded rapidly, going beyond image-text tasks to process diverse input modalities such as audio, video and more \citep{liu2023llava,hartsockVisionlanguageModelsMedical2024, yin2024survey}. However, reliable deployment of these models in high-stakes practical settings (e.g., medical analysis \citep{LiuMedical2023, TianMedical2024, LeeMedical2025}) remains challenging due to their tendency to produce plausible but erroneous outputs, or confabulations \citep{berrios1998confabulations, farquharDetectingHallucinationsLarge2024}, potentially more so than text-only LLMs given greater complexities of processing multimodal input. While some works have attempted to directly mitigate these issues during model training by adjusting the training data \citep{gaiveliu2023aligning, yu2024hallucidoctor, wang2024mitigating, yue2024less}, model architecture \citep{liu2024improved, tong2024eyes, zhai2023halle}, or training process \citep{jiangHallucinationAugmentedContrastive2024, yue2024less}, these errors cannot be fully eliminated given noisy and ambiguous real-world data.

A complementary approach to tackling this challenge is developing effective uncertainty quantification methods for MLLMs, which would enable escalation of task instances with unreliable model responses to human experts or larger models.
However, existing uncertainty quantification works largely focus on text-only LLM settings \citep{kuhnSemanticUncertaintyLinguistic2023, malinin2021uncertainty} that may not capture multimodal coherence signals (e.g., text output ungrounded in the input media), or are designed only for specific modalities (e.g., image-text input) such as by relying on external verifiers \citep{gaiveliu2023aligning, mhalsun2023aligning}, or requiring relatively expensive modality-specific feature engineering/computation \citep{zhangVLUncertaintyDetectingHallucination2024, khanConsistencyUncertaintyIdentifying2024} that may not be practical especially in settings with resource constraints. Given our inherently multimodal environment, rather than having to specially engineer separate modality-specific uncertainty metrics for new data modalities \citep{yin2024survey, feng2025smellnet} that may arise, which is not scalable nor effective, we ask: 
\emph{Can we achieve an effective training-free MLLM uncertainty framework that can generalize across modalities without requiring modality-specific engineering?}

In our work, we present \algname (\underline{U}ncertainty using \underline{M}odel \underline{P}robability \underline{I}ndicators and \underline{R}esponse \underline{E}mbeddings), a training-free framework to estimate the uncertainty of MLLM output by considering both global semantic diversity and local multimodal incoherence of model responses.   
Intuitively, for task instances that the model is uncertain about, its sampled responses would likely be more diverse in semantic space (high volume) and potentially more incoherent with respect to its multimodal input.
\algname makes use of MLLMs' own conditional probabilities as indicators of multimodal coherence and rich multimodal feature space for semantic embeddings, removing the need for external tools. Unlike past works that have considered only semantic diversity or probability indicators, and inspired by the quality-diversity decomposition in Determinantal Point Processes (DPPs) \citep{kuleszaDeterminantalPointProcesses2012}, \algname incorporates both of these signals coherently into a task-instance uncertainty metric by computing the incoherence-adjusted semantic volume spanned by sampled MLLM responses. This approach allows the framework to generalize natively across various modalities (e.g., image, audio, video) by leveraging the MLLM's own internal multimodal capabilities. Our key contributions are as follows: 
\begin{itemize}[leftmargin=*, nosep]
    \item We propose a clear set of desiderata for MLLM uncertainty metrics, including discrimination, risk-linearity and multimodal coherence (\Cref{sec:desiderata});
    \item We develop a training-free uncertainty metric based on incoherence-adjusted semantic volume, incorporating both global semantic diversity and local response quality via a quality-diversity kernel inspired by DPP (\Cref{sec:kernel}); 
    \item We provide analysis on how \algname decomposes to a semantic volume term and a Monte Carlo estimate of quadratic entropy, and how both terms interplay, resulting in \algname's strong empirical performance (\Cref{sec:prac_consi}); 
    \item We show that \algname consistently outperforms baselines in evaluations (e.g., on AUROC, ECE, AURAC) across diverse multimodal input QA benchmarks (e.g., image, audio, video input), and also non-text output tasks (e.g., image, audio generation) despite not needing modality-specific mechanisms (\cref{sec:auroc},~\ref{sec:calibration},~\ref{sec:design_desiderata}). This includes settings involving blackbox API-access MLLM models via whitebox proxy estimation (\cref{sec:effective_practical}).
\end{itemize}

\section{Problem formulation and desiderata}
\label{sec:prob_desi}

\textbf{Problem formulation.} Consider a whitebox MLLM $\model$ that takes in  multi-modality input $\query$ (e.g. a combination of text and image, audio or video input), and autoregressively produces text output $\ans=[w_i]_{i=1}^N$ that are sequences of tokens $w$ from the MLLM decoder's vocab space\footnote{We include analysis and empirical results of other modality output such as images and audio in \cref{sec:design_desiderata}}. 
These MLLMs can be represented as model-generated conditional probability distributions $\mathbb{P}_\model(y|\query)=\mathbb{P}_\model(w_1| \query)\mathbb{P}_\model(w_2|\query, w_1)\ldots \mathbb{P}_\model(w_n|\query, w_{1:N-1})$. 

For each task $\taskspace$, we have task instances $\task \coloneqq (\query_\task; \ans_\task^*) \in \taskspace$ consisting of  
the multimodal input query $\query_\task$, and ground truth output $\ans_\task^*$ that is only available during evaluation. 
We consider an MLLM's response $\hat{\ans}_\task$ to a task instance $\task$ as its most likely output sequence which can be approximately sampled with low temperature (e.g. $T=0.01$) from 
the model $\model$ given query $\query_\task$, and its task instance accuracy as 
a binary indicator $a(\model, \task) \coloneqq \mathbb{I}\{\hat{\ans}_\task=\ans_\task^*\}$ that evaluates how well the response matches the ground truth\footnote{Empirical results are robust to choice of common indicators (\cref{app:eval_ablation}). For image/audio generation tasks without standard binary indicators, we use continuous quality metrics (see \cref{sec:design_desiderata}).}.

Our goal is to develop a framework that computes a task instance-specific uncertainty metric $\metric(\model, \task)$ for any $\task \in \taskspace$ at inference time that is indicative of task instance accuracy $\acc(\model,\task)$. Specifically, we are developing metrics that can be used to indicate confabulations, which is when MLLMs fluently provide arbitrary, wrong responses to queries due to lack of relevant knowledge, as discussed in past works
\citep{berrios1998confabulations, farquharDetectingHallucinationsLarge2024}. Intuitively, the assumption is that the more uncertain the MLLM is about what the correct answer is, the more diverse its sampled (wrong) responses would be. This behavior will lead to sampled response diversity that we can characterize via a metric $\metric(\model, \task)$ without access to the ground truth output.
We do not consider other inaccuracies such as when MLLMs consistently produce the same wrong responses to queries when sampled (e.g., due to training problems or erroneous data).
However, confabulations are common and a good metric $\metric(\model, \task)$ that detect these errors could also be indicative of task instance accuracy $\acc(\model, \task)$ as past works have demonstrated \citep{farquharDetectingHallucinationsLarge2024}, and which we also show empirically in \cref{sec:exp}.

\label{sec:desiderata}
\textbf{Desiderata.} Given the above setting, we propose a non-exhaustive list of desiderata that such an uncertainty metric $\metric(\model, \task)$ should satisfy. First, we consider two \textit{effectiveness} desiderata based on key functions that the metric could be expected to play in practice: discriminating unreliable instances and providing a continuous notion of risk. 
\newcounter{maincount} 
\newcounter{subs}[maincount] %
\renewcommand{\thesubs}{\arabic{maincount}\alph{subs}}
\begin{enumerate}[label=\textbf{R\arabic*},leftmargin=*,align=left,itemsep=5pt,topsep=0pt,parsep=0pt,partopsep=0pt, start=1,wide=0pt]

\item \label{r:classify}\textbf{Discrimination.} The metric should effectively distinguish between task instances that the MLLM will get correct or not. Formally, let $\mathcal{C}\coloneqq\{\task \mid \acc(\model,\task)=1\}$ and $\mathcal{W}\coloneqq\{\task \mid \acc(\model,\task)=0\}$ be the sets of correct and wrong instances. For randomly sampled pairs $\task_c \in \mathcal{C}$ and $\task_w \in \mathcal{W}$,
\begin{equation}
    \label{eq:distinguish}
    \mathbb{P}[\metric(\model,\task_w) > \metric(\model,\task_c)] \approx 1.
\end{equation}
Ideally, the left-hand side expression of \Cref{eq:distinguish} (or equivalently,
AUROC) approaches 1, meaning there exists a threshold $\gamma$ that can reliably classify errors (i.e., $\metric > \gamma \implies \task \in \mathcal{W}$). In practice, the achievable value depends on the task and model, but the higher the AUROC, the better the metric can discriminate incorrect responses.

\item \label{r:risk} \textbf{Risk-score quality.}
Beyond \ref{r:classify}, the metric should be a \emph{continuous proxy} for instance-level error risk, supporting a simple mapping to model error probabilities which we can assess via two sub-criteria:

\begin{enumerate}[label=\textbf{R\arabic{enumi}\alph*}, ref=R\arabic{enumi}\alph*, leftmargin=0pt, labelindent=0pt, wide=0pt, itemsep=2pt]

\item \label{r:r2a} \textbf{Risk-linearity.}
The model's conditional error probability should vary approximately affinely with the metric:
\begin{equation}
\label{eq:r2a_linearity_clean}
\mathbb{P}(\acc(\model,\task)=0 \mid \metric=s) \approx \beta_0 + \beta_1 s, \quad \beta_1 > 0,
\end{equation}
so that the metric is directly usable for risk control.
We evaluate this via Pearson correlation (CPC) based on standard reliability-curve binning (details in \cref{sec:calibration}).

\item \label{r:r2b} \textbf{Unlabeled normalization to a probability proxy.}
Building on \ref{r:r2a}, given only \emph{unlabeled} task instances, the score should be easily normalizable to $\tilde{\metric}\in[0,1]$ (e.g., via min-max scaling) such that it empirically tracks the approximate error probability, i.e.,
\begin{equation}
\label{eq:r2b_calibration_clean}
\mathbb{P}(\acc(\model,\task)=0\mid \tilde{\metric}=p) \approx p,\quad \forall p\in[0,1].
\end{equation}
We evaluate this via the Expected Calibration Error (ECE) of the normalized score $\tilde{\metric}$ \citep{guoCalibrationModernNeural2017b}. Note that \ref{r:risk} complements \ref{r:classify}, as a metric can be well-calibrated but poorly discriminative (see \Cref{app:effectdesiderata}).

\end{enumerate}

\end{enumerate}

We also consider design desiderata related to common practical requirements of metric deployment:
\begin{enumerate}[label=\textbf{R\arabic*},leftmargin=*,itemsep=5pt,topsep=0pt,parsep=0pt,partopsep=0pt, start=3,wide=0pt]

\item \label{r:generalizability} \textbf{Multimodal generalizability.} The metric should be applicable for MLLMs of different input modalities (e.g., text, image, audio, video), without requiring modality-specific engineering or tools, and still satisfy all other desiderata. A stricter version, \ref{r:generalizability}', is for the metric to apply across different output modalities (e.g., image/audio generation) as well, even though many MLLM works consider only multimodal input with text output. While not the focus of this work, we provide some analysis on \ref{r:generalizability}' in \Cref{sec:design_desiderata}.

\item \label{r:coherence} \textbf{Multimodal coherence.} The metric should consider the coherence of each sampled response with respect to all input modalities in the task instance query, e.g., whether the response is grounded in all input modalities. We test this by recomputing the uncertainty metrics after removing/corrupting one modality while holding the generated response fixed -- a multimodal coherent metric should degrade predictably (see \cref{sec:method} and \cref{app:coherence_exp}).

\item \label{r:efficiency} \textbf{Computational Efficiency.} The metric could be efficiently computed, with (a) fast computational runtime, and (b) ideally not require external tools or reward models which may not be feasible for constrained inference pipelines. When analyzing blackbox MLLMs, condition (b) may need to be relaxed to use a proxy whitebox models, but the proxy model should be small and cheap to run.

\end{enumerate}

\begin{figure*}[t]
    \centering
    \includegraphics[width=\linewidth]{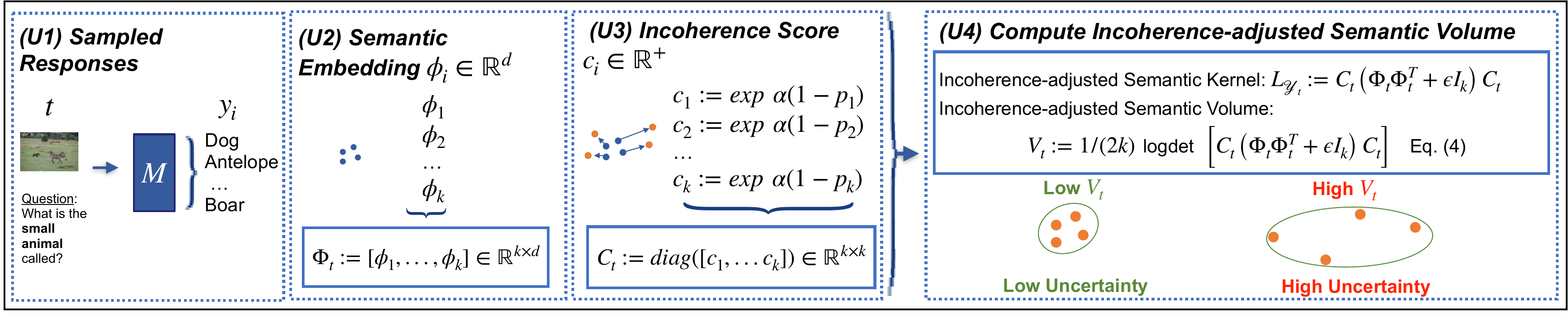}
    \caption{Schematic describing the \algname framework.} 
    \label{fig:schematic}
\end{figure*}

\section{Method}
\label{sec:method}

Our proposed framework, \algname, combines several insights to meet the desiderata. First, well-trained MLLMs that can handle multimodal tasks would already possess rich modality-specific features.
Hence, we propose developing a \textit{modality-general framework that relies on an MLLM's own rich inherent embeddings, removing the need for the use of external tools} (\ref{r:generalizability}, \ref{r:efficiency}). Second, MLLMs can \textit{take into account all input modalities and response incoherence via their model-generated conditional probabilities} (\ref{r:coherence}), which though uncalibrated contain useful signals on the relative multimodal coherence of responses.
Third, model uncertainty about the correct answer can manifest as more spread-out output distributions (\cref{sec:prob_desi}) and diverse output responses when sampled. This motivates us to consider whether \textit{diversity measures} could serve as a basis for our uncertainty metric $\metric(\model, \task)$. 

\subsection{Quality-diversity kernel and semantic volume}
\label{sec:kernel}
Hence, inspired by determinantal point processes (DPP) literature \citep{kuleszaDeterminantalPointProcesses2012, lau2024dipper}, we characterize uncertainty via a \emph{quality-diversity} kernel applied to sampled responses in the MLLM's embedding/semantic space, which captures both a local measure of quality for each response (based on the model's own incoherence signal) and a global measure of how semantically different the responses are from each other. Intuitively, given samples of $k$ responses represented on a hypersphere in semantic space (e.g., based on normalized MLLM embeddings), samples with greater response diversity have more widely spaced out points that enclose larger `semantic volume'. Meanwhile, each response would also have varying degrees of `quality', based on how incoherent the MLLM assesses it to be (e.g., model output probabilities), that we can use to scale its norm. More incoherent responses will then have bigger norms and contribute to larger enclosed volume, leading to greater uncertainty. 
Specifically, our framework consists of the following steps (\Cref{fig:schematic}):

\begin{enumerate}[label=\textbf{U\arabic*},leftmargin=*, itemindent=0pt, align=left, listparindent=0pt, itemsep=5pt,topsep=0pt,parsep=0pt,partopsep=0pt, start=1, wide=0pt]
\item \textbf{Sampling.} Given a task instance $\task \in \taskspace$, we sample $k$ MLLM responses 
$\ansset_\task=[\ans_i]_{i=1}^{k}$. While our theoretical analysis assumes i.i.d. responses from $\mathbb{P}_\model(\cdot \mid \task)$, in practice this distribution is defined by standard MLLM inference settings such as temp $T=1$ and nucleus sampling (see ablations in \cref{app:ablation_study}). 
\item \textbf{Semantic embedding.} For each response $y_i$, we extract the normalized, last MLLM embedding layer vector of the EOS token at the end of the response (ablations in \Cref{app:layers}) 
$\phi_i \in \mathbb{R}^d$, $d>k$.
The $k$ samples form 
a $k \times d$ embedding matrix $\Phi_\task$. The MLLM's rich multimodal semantic embeddings satisfy \ref{r:generalizability}: the $k$ samples lie on a $d-1$ dimensional hypersphere in the relevant modalities' semantic space, with the angular dispersion among the responses representing semantic distances \citep{reimers-2019-sentence-bert}.

\item \textbf{Incoherence score.}
\label{u:incoherence_score} 
Concurrently, we compute an incoherence score $\inc_i \in \mathbb{R}^+$ for each response, where larger $c_i$ indicates contribution to a higher uncertainty score. In general, we can consider scores of the form $c_i = \exp (\alpha f_i(\model,\task))$, where $f_i(\model,\task)$ is a response-specific incoherence value conditional on all input modalities (\ref{r:coherence}) and $\alpha \geq 0$ is a task $\taskspace$ specific constant that sets the scale for $f_i$. 
We propose setting $f_i(\model,\task)\coloneq (1-p_i)$, where $p_i$ is the $\model$ model-generated probability for each response $y_i$ conditional on task input $\task$, i.e.,
$p_i := \exp( \sum_{j=1}^{N_i}\log \mathbb{P}_\model\left(w_{i,j}\mid q_t, w_{i,1:j-1}\right) )= \mathbb{P}_\model(y_i \mid \query_t)$. As a well-trained MLLM that is useful in practice will have its response distribution $\mathbb{P}_\model(y_i \mid \query_t)$ conditional on all multimodal components of query $\query_\task$, taking $p_i$ into account contributes to satisfying multimodal coherence \ref{r:coherence} without external verifiers, which analyze empirically later in \cref{sec:exp}.
For our experiments, we use $p_i$ which matches our theoretical analysis and probabilistic interpretation as discussed later (\cref{sec:prac_consi}), though common heuristics such as length normalization could also be applied with good results (see \cref{sec:ln_prob}), which would be useful in settings such as longer generation tasks.
We chose the form $f_i=(1-p_i)$ since: (1) it can be interpreted as the model's internal measure of its relative doubt about response $i$; and (2) it leads to \algname incorporating an interpretable measure of the sampled responses' \emph{quadratic entropy}, which we discuss in \cref{sec:prac_consi}.
With $k$ samples, we can define a $k \times k$ incoherence scaling matrix $C_\task \coloneq \text{diag}(\exp(\alpha (1-p_i)))$ that scales the norm of each response embedding by its incoherence score.

\item \textbf{Incoherence-adjusted semantic volume.} Without any external tools (\ref{r:efficiency}), 
we can compute the incoherence-adjusted semantic kernel based on the Gram matrix formed by the semantic embeddings, adjusted by the incoherence scores: 
$L_{\ansset_\task} \coloneqq C_\task (\Phi_\task\Phi_\task^\top+ \epsilon I_k) C_\task$, 
where $\epsilon > 0$ is a small jitter and $I_k$ a $k \times k$ identity matrix, added for numerical stability and to avoid degeneracy.
To compute an uncertainty score, note that from geometry, $\det (L_{\ansset_\task})$ is a scalar representing
the squared volume spanned by the Gram matrix vectors. Hence we define the \algname uncertainty metric:
\begin{equation}
\label{eq:umpire}
\method_\task\coloneqq 1/(2k)  \log \det \left [ C_\task \left (\Phi_\task\Phi_\task^\top+ \epsilon I_k \right ) C_\task \right ],
\end{equation}
\end{enumerate}
which is the sample count normalized log of the incoherence-adjusted semantic volume, with a DPP-style quality-diversity kernel over sampled responses. In our context, the `quality' term is the incoherence score
which is a function of model-generated response probabilities, while `diversity' is measured in the MLLM's response embedding space.
Samples with more spread-out responses, each with higher incoherence scores, will end up with larger incoherence-adjusted semantic volume, representing more diversity and larger \algname uncertainty values.

\subsection{Analysis and practical considerations}
\label{sec:prac_consi}
While \cref{eq:umpire} is our primary definition, we provide an exact decomposition of it into two terms (\cref{prop:app:decomp}, \cref{app:theory_setup}) following from the quality-diversity kernel factorization for diagonal scaling matrix $C_t$, that allows for further interpretability and more numerically stable implementation:
\begin{align}
    \label{eq:casev}
    V_t = \tfrac{1}{2k}  \log \det (\Phi_\task\Phi_\task^\top+ \epsilon I_k) + \tfrac{\alpha}{k} \textstyle \sum_{i=1}^k (1 - p_i). 
\end{align}
Note that $\alpha$ is the sole hyperparameter balancing both terms, which we can heuristically set without labeled data (details below).
The first term $\unadjmethod_\task\coloneqq 1/(2k)  \log \det (\Phi_\task\Phi_\task^\top+ \epsilon I_k)$ is the unadjusted semantic volume, which captures the semantic diversity among the sampled responses. Samples with responses that comprise a wide range of semantic meanings (e.g., many plausible answers) will have larger volumes, while those where responses have similar meaning will result in small volumes (the $\epsilon I_k$ regularization keeps the logdet term well defined).   
To provide more intuition, we provide theoretical analysis on how $\unadjmethod_\task$ increases as semantic modes become more separate, under a mixture of modes assumption as illustration (\cref{prop:app:psdmono}, \cref{app:theory:mixture}). This relates to how more diverse, competing semantic responses result in higher unadjusted semantic volume, indicating greater uncertainty.

The second term, $\quadent_\task \coloneq (1/k)\sum_i^k f_i(\model,\task)$, can be interpreted as a Monte Carlo estimate of the expected incoherence value $\mathbb{E}[f(Y)]$ when $Y\stackrel{iid}{\sim} \mathbb{P}_\model(\cdot| t)$. In particular, note that when using $f_i=(1-p_i)$, which can be reparameterized as $f(y)=1-\mathbb{P}_\model(y\mid \query_t)$ for each response from $Y\stackrel{iid}{\sim}\mathbb{P}_\model(\cdot|t)$ and $p_i=\mathbb{P}_\model(y_i|t)$,
$\quadent_\task$ could be interpreted as the Monte Carlo estimate of quadratic entropy 
$\mathbb{E}_{Y\sim \mathbb{P}_\model(\cdot|t)}\!\big[1-\mathbb{P}_\model(Y|t)\big]
\ = 1-\sum_y \mathbb{P}_\model(y\mid \query_t)^2$
for a given task instance $t$ (see \cref{lem:app:QasH2}, \cref{app:theory:quadratic}). The quadratic entropy captures the dispersion of probability mass over responses and can be interpreted as the probability that two sampled responses from $\mathbb{P}(\cdot|t)$ do not match \citep{vajda1968bounds,Rnyi1961OnMO, vajda2007generalized}, effectively indicating higher uncertainty when a model spreads out probability mass across different responses. 
Note that while $\quadent_\task$ does not explicitly consider semantic distances, observing small $\quadent_\task$ values constraints the diversity of semantic modes present in the sampled responses as it serves as an upper bound (see \cref{lem:app:coarsen}, \cref{app:smallQ-modes}), implying that probability mass is largely concentrated in a dominant/small number of semantic modes (\cref{cor:app:dominant}, \cref{cor:app:manymodes}).
Unlike the more commonly used Shannon entropy, quadratic entropy is less sensitive to long tails of very small probabilities (i.e., in $\log p$) of model responses and typically yield lower variance estimates for small $k$. Depending on use cases, our framework allows for various forms of $f_i$ that may have different sensitivities to $p_i$ and other characteristics.

\textbf{Interplay.} Note that $\unadjmethod_\task$ and $\quadent_\task$ are complementary to each other. The term $\quadent_\task$ is often a strong discriminator (\ref{r:classify}) as probability mass dispersion generally rises when the model is uncertain, and as mentioned small $\quadent_\task$ provides some constraints on semantic diversity. However, it is unable to distinguish different sources of dispersion (e.g., large semantic ambiguity or predominantly lexical variability among sampled responses). The term $\unadjmethod_\task$ resolves this as it measures \emph{semantic} spread among responses, differentiating among regimes where there are more semantic modes or greater semantic variance even when they have similar probability mass dispersions. Empirically, this additional information improves risk-score quality (\ref{r:risk}), allowing the overall \algname metric $\method_\task$ to better satisfy the desiderata with more linear reliability curves (higher CPC) and min-max normalized scores with lower ECE, while maintaining competitive AUROC (\cref{sec:exp}). We see this across most datasets by examining the separate performances of $\quadent_\task$, $\unadjmethod_\task$ and $\method_\task$ (\cref{tab:adaptive_best}, \cref{app:alphasensitivity}), and have also performed Likelihood Ratio Tests demonstrating the statistical significance of $\unadjmethod_\task$ in complementing $\quadent_\task$ to better estimate model error probability (\cref{tab:lrt_results}). To get better intuition on why \ref{r:classify}/AUROC performance can be maintained, note that as a rank-based metric, it is robust to monotonic transformations of the underlying scores. Since $\unadjmethod_\task$ and $\quadent_\task$ are often positively correlated (increasing with uncertainty), the combined metric $\method_\task$ would largely preserve the ranking of correct versus incorrect instances established by $\quadent_\task$, while refining the \emph{magnitude} of the scores to better align with error probabilities (\ref{r:risk}). \cref{app:qualitative} shows qualitative examples illustrating how considering only one term could lead to sub-optimal outcomes.

The hyperparameter $\alpha$ balances the contribution of both terms and is set for each overall task $\taskspace$. We found that a simple heuristic for setting $\alpha$ \emph{without labeled data} (which we denote as adaptive $\alpha$), where we set $\alpha$ to be the ratio of the median of $\unadjmethod_\task$ and $\quadent_\task$ based on an unlabeled subset of task instances, is usually sufficient for good empirical results as seen in our experiments in \cref{sec:exp}. Alternatively, having a small labeled dev set could allow hyperparameter tuning that can further boost results (\Cref{app:alphasensitivity}, \cref{tab:adaptive_best}).

\textbf{Practical considerations.} 
Our \algname metric $\method_\task$ exhibits sub-Gaussian concentration, converging to its conditional mean at $O(1/\sqrt{k})$, and have misranking probabilities decay exponentially with $k\Delta^2$ if correct-wrong instances have an expected-score gap $\Delta$, as we show in \cref{prop:app:concentration}, \cref{app:theory:concentration}. This supports \algname's stable discrimination (\ref{r:classify}) performance with modest sampling budgets, which we observe empirically in \cref{sec:exp} and \cref{app:efficiency} \cref{fig:run_time_num_gen}b.

In practice, \cref{eq:casev} provides a more numerically stable method for computing the \algname metric, where we avoid explicitly forming $C(\cdot)C$ and compute the log determinant via Cholesky decomposition. While the latter has a complexity of $O(k^3)$, this cost is negligible in practice, given the modest $k$ used, compared to the cost of MLLM inference which is incurred by all metrics. We empirically show how \algname is computationally efficient compared to baselines in \cref{sec:design_desiderata}.
For long-generation tasks where response probabilities become small, we could apply heuristics used by past works such as length-normalization of log probabilities (see \cref{sec:ln_prob} for ablations) or considering a limited answer span (e.g., the final answer in reasoning traces).

\section{Experimental results}
\label{sec:exp}

We empirically evaluate whether \algname satisfies the desiderata (\Cref{sec:desiderata}),
primarily conducting experiments for the image-text input setting given its wider range of established benchmarks, but also analyzing audio-text and video-text input modality tasks
to assess 
\ref{r:generalizability}, as well as image and audio generation tasks to assess \ref{r:generalizability}'.
Details of all datasets are in \Cref{app:dataset}.

We use \llava \citep{liu2023llava}, \modelphi \citep{abouelenin2025phi}, \llavavideo \citep{zhang2024llavanextvideo} for image-text, audio-text, and video-text experiments, respectively,
and compare \algname against baselines representative of different approaches, including a modality-specific (image) metric Neighborhood Consistency (\cu) \citep{khan2024consistency}, and text-only LLM uncertainty metrics that we adapt to the multimodal input setting:  LN-Entropy (\lne) \citep{malinin2021uncertainty}, Semantic Entropy (\se) \citep{kuhnSemanticUncertaintyLinguistic2023}, and Eigenscore (\eigen) \citep{chenLLMsInternalStates2024}.
While \cu and \se use external tools and hence violate \ref{r:efficiency}, we run them to analyze any performance issues beyond this violation. Details on experiment settings, additional baseline methods and ablation studies to highlight \algname's robustness across parameters are in \Cref{app:exp_setup_result} and \Cref{app:ablation_study}.

\begin{table*}[ht]
\caption{Effectiveness of various uncertainty metrics
across multimodal datasets
(details in \Cref{app:benchmark}).
The evaluation covers (i) AUROC ($\times 100$) ($\uparrow$ better) (\ref{r:classify}), 
(ii) CPC ($\times 100$) ($\uparrow$ better) (\textbf{\ref{r:r2a}}), 
and (iii) ECE ($\downarrow$ better) (\textbf{\ref{r:r2b}}).
\textbf{[I]}, \textbf{[A]}, \textbf{[V]} denotes image, audio, video modalities, respectively. 
Overall, \algname achieves the best or second-best performance across all baselines and modalities, with only marginal gaps when not ranked first. \cref{tab:statistical-tests} in \cref{app:statistical-t-test} provides supplementary results showing that \algname's performance gains over baselines are statistically significant. }
\label{tab:r12_results_rotate}
\centering
\setlength{\tabcolsep}{2.2pt}
\resizebox{\linewidth}{!}{%
\begin{tabular}{l|ccccc|ccccc|ccccc}
\toprule
\multirow{2}{*}{Dataset} & \multicolumn{5}{c|}{AUROC $\uparrow$}& \multicolumn{5}{c|}{CPC $\uparrow$}& \multicolumn{5}{c}{ECE $\downarrow$} \\
\cmidrule(lr){2-6} \cmidrule(lr){7-11} \cmidrule(lr){12-16}
 & NC & \lne & \se & Eigen & Ours & NC & \lne & \se & Eigen & Ours & NC & \lne & \se & Eigen & Ours \\
\midrule\midrule
\textbf{[I]} VQAv2 & 76.5{\scriptsize \textsubscript{±0.1}} & 78.6{\scriptsize \textsubscript{±0.1}} & 84.7{\scriptsize \textsubscript{±0.3}} & \underline{86.7{\scriptsize \textsubscript{±0.2}}} & \textbf{88.1{\scriptsize \textsubscript{±0.2}}} & 78.4{\scriptsize \textsubscript{±1.3}} & 62.6{\scriptsize \textsubscript{±8.9}} & 84.9{\scriptsize \textsubscript{±5.0}} & \underline{93.9{\scriptsize \textsubscript{±0.6}}} & \textbf{94.6{\scriptsize \textsubscript{±0.1}}} & .343{\scriptsize \textsubscript{±.009}} & .049{\scriptsize \textsubscript{±.017}} & \underline{.045{\scriptsize \textsubscript{±.002}}} & .048{\scriptsize \textsubscript{±.009}} & \textbf{.038{\scriptsize \textsubscript{±.004}}} \\ \textbf{[I]} OKVQA & 53.9{\scriptsize \textsubscript{±0.9}} & 70.3{\scriptsize \textsubscript{±0.1}} & 71.6{\scriptsize \textsubscript{±0.1}} & \underline{73.7{\scriptsize \textsubscript{±0.1}}} & \textbf{75.3{\scriptsize \textsubscript{±0.1}}} & 31.9{\scriptsize \textsubscript{±40.3}} & 71.9{\scriptsize \textsubscript{±12.0}} & 28.8{\scriptsize \textsubscript{±1.8}} & \underline{85.2{\scriptsize \textsubscript{±8.9}}} & \textbf{96.7{\scriptsize \textsubscript{±0.7}}} & .484{\scriptsize \textsubscript{±.017}} & \textbf{.039{\scriptsize \textsubscript{±.002}}} & .194{\scriptsize \textsubscript{±.042}} & .152{\scriptsize \textsubscript{±.010}} & \underline{.045{\scriptsize \textsubscript{±.007}}} \\ \textbf{[I]} AdVQA & 65.7{\scriptsize \textsubscript{±0.6}} & 64.8{\scriptsize \textsubscript{±0.2}} & 76.4{\scriptsize \textsubscript{±0.2}} & \underline{77.5{\scriptsize \textsubscript{±0.3}}} & \textbf{78.7{\scriptsize \textsubscript{±0.2}}} & 62.4{\scriptsize \textsubscript{±9.5}} & 87.0{\scriptsize \textsubscript{±4.5}} & 75.5{\scriptsize \textsubscript{±0.1}} & \underline{89.2{\scriptsize \textsubscript{±1.3}}} & \textbf{98.0{\scriptsize \textsubscript{±0.1}}} & .335{\scriptsize \textsubscript{±.012}} & \underline{.077{\scriptsize \textsubscript{±.006}}} & .156{\scriptsize \textsubscript{±.015}} & .216{\scriptsize \textsubscript{±.005}} & \textbf{.039{\scriptsize \textsubscript{±.003}}} \\ \textbf{[I]} MathVista & 75.6{\scriptsize \textsubscript{±0.2}} & 66.9{\scriptsize \textsubscript{±0.3}} & 81.0{\scriptsize \textsubscript{±0.7}} & \underline{81.8{\scriptsize \textsubscript{±0.6}}} & \textbf{82.6{\scriptsize \textsubscript{±0.6}}} & 61.5{\scriptsize \textsubscript{±7.9}} & \underline{82.6{\scriptsize \textsubscript{±2.0}}} & 76.7{\scriptsize \textsubscript{±2.4}} & 69.5{\scriptsize \textsubscript{±2.8}} & \textbf{83.9{\scriptsize \textsubscript{±2.7}}} & \textbf{.087{\scriptsize \textsubscript{±.011}}} & .125{\scriptsize \textsubscript{±.018}} & .189{\scriptsize \textsubscript{±.051}} & .314{\scriptsize \textsubscript{±.016}} & \underline{.089{\scriptsize \textsubscript{±.015}}} \\ \textbf{[I]} VQA-RAD & 69.7{\scriptsize \textsubscript{±1.3}} & 62.9{\scriptsize \textsubscript{±1.1}} & 76.8{\scriptsize \textsubscript{±0.4}} & \underline{79.5{\scriptsize \textsubscript{±0.6}}} & \textbf{80.7{\scriptsize \textsubscript{±0.3}}} & 47.6{\scriptsize \textsubscript{±4.9}} & \textbf{87.1{\scriptsize \textsubscript{±3.4}}} & 63.8{\scriptsize \textsubscript{±1.2}} & 62.9{\scriptsize \textsubscript{±4.2}} & \underline{80.8{\scriptsize \textsubscript{±7.1}}} & .139{\scriptsize \textsubscript{±.003}} & \underline{.100{\scriptsize \textsubscript{±.012}}} & .376{\scriptsize \textsubscript{±.019}} & .352{\scriptsize \textsubscript{±.025}} & \textbf{.072{\scriptsize \textsubscript{±.009}}} \\ \midrule \rowcolor[gray]{0.9}\textbf{Avg. [I]} & 68.3{\scriptsize \textsubscript{±0.6}} & 68.7{\scriptsize \textsubscript{±0.4}} & 78.1{\scriptsize \textsubscript{±0.3}} & \underline{79.8{\scriptsize \textsubscript{±0.4}}} & \textbf{81.1{\scriptsize \textsubscript{±0.3}}} & 56.3{\scriptsize \textsubscript{±12.8}} & 78.2{\scriptsize \textsubscript{±6.2}} & 65.9{\scriptsize \textsubscript{±2.1}} & \underline{80.1{\scriptsize \textsubscript{±3.6}}} & \textbf{90.8{\scriptsize \textsubscript{±2.1}}} & .278{\scriptsize \textsubscript{±.010}} & \underline{.078{\scriptsize \textsubscript{±.011}}} & .192{\scriptsize \textsubscript{±.026}} & .217{\scriptsize \textsubscript{±.013}} & \textbf{.057{\scriptsize \textsubscript{±.008}}} \\ \midrule \textbf{[A]} SLUE & - & 75.6{\scriptsize \textsubscript{±0.5}} & \underline{80.0{\scriptsize \textsubscript{±1.1}}} & 78.8{\scriptsize \textsubscript{±0.3}} & \textbf{82.2{\scriptsize \textsubscript{±0.3}}} & - & 74.0{\scriptsize \textsubscript{±5.5}} & 84.2{\scriptsize \textsubscript{±2.7}} & \underline{85.1{\scriptsize \textsubscript{±2.7}}} & \textbf{93.6{\scriptsize \textsubscript{±0.5}}} & - & \underline{.100{\scriptsize \textsubscript{±.023}}} & .224{\scriptsize \textsubscript{±.010}} & .164{\scriptsize \textsubscript{±.006}} & \textbf{.054{\scriptsize \textsubscript{±.009}}} \\ \textbf{[A]} SpokenSQ. & - & 70.7{\scriptsize \textsubscript{±0.2}} & 76.7{\scriptsize \textsubscript{±0.2}} & \underline{77.5{\scriptsize \textsubscript{±0.1}}} & \textbf{79.7{\scriptsize \textsubscript{±0.1}}} & - & 76.7{\scriptsize \textsubscript{±13.8}} & 87.4{\scriptsize \textsubscript{±2.5}} & \underline{87.8{\scriptsize \textsubscript{±4.3}}} & \textbf{97.8{\scriptsize \textsubscript{±0.5}}} & - & \underline{.070{\scriptsize \textsubscript{±.003}}} & .291{\scriptsize \textsubscript{±.019}} & .244{\scriptsize \textsubscript{±.009}} & \textbf{.031{\scriptsize \textsubscript{±.002}}} \\ \textbf{[V]} VidMME & - & 71.3{\scriptsize \textsubscript{±1.6}} & 71.9{\scriptsize \textsubscript{±1.3}} & \underline{81.2{\scriptsize \textsubscript{±0.9}}} & \textbf{81.9{\scriptsize \textsubscript{±0.4}}} & - & 69.1{\scriptsize \textsubscript{±2.4}} & 75.9{\scriptsize \textsubscript{±7.3}} & \underline{76.3{\scriptsize \textsubscript{±3.2}}} & \textbf{77.8{\scriptsize \textsubscript{±4.3}}} & - & \textbf{.082{\scriptsize \textsubscript{±.006}}} & .210{\scriptsize \textsubscript{±.017}} & .325{\scriptsize \textsubscript{±.008}} & \underline{.126{\scriptsize \textsubscript{±.026}}} \\ \midrule \rowcolor[gray]{0.9}\textbf{Avg. [All]} & - & 70.1{\scriptsize \textsubscript{±0.5}} & 77.4{\scriptsize \textsubscript{±0.5}} & \underline{79.6{\scriptsize \textsubscript{±0.4}}} & \textbf{81.2{\scriptsize \textsubscript{±0.3}}} & - & 76.4{\scriptsize \textsubscript{±6.6}} & 72.1{\scriptsize \textsubscript{±2.9}} & \underline{81.2{\scriptsize \textsubscript{±3.5}}} & \textbf{90.4{\scriptsize \textsubscript{±2.0}}} & - & \underline{.080{\scriptsize \textsubscript{±.011}}} & .211{\scriptsize \textsubscript{±.022}} & .227{\scriptsize \textsubscript{±.011}} & \textbf{.062{\scriptsize \textsubscript{±.009}}} \\
\bottomrule
\end{tabular}
}
\vskip -0.1in
\end{table*}

\subsection{\ref{r:classify}: Discrimination}
\label{sec:auroc}
We first evaluate metrics on \ref{r:classify}, i.e., whether the metrics can distinguish task instances that the MLLM will get correct ($\task_c$) or wrong ($\task_w$), measured via AUROC.
\Cref{tab:r12_results_rotate} shows that \algname \emph{consistently achieves the best performance} with an average AUROC of 
$0.81$
on image-text datasets, excelling
also in challenging datasets like OKVQA and AdVQA where multimodal-specific methods like \cu struggle due to adversarial and out-of-distribution scenarios.
Beyond image-text input tasks, \algname also \emph{demonstrates robust discrimination performance across audio-text, video-text tasks}, underscoring its modality generalizability \ref{r:generalizability}.
In \Cref{app:tpr} \Cref{tab:tpr_results}, we also show how \algname framework's better AUROC performance for \ref{r:classify} translates to consistently higher True Positive Rates (TPR) given FPR requirements.

\subsection{\textbf{\ref{r:r2a}}, \textbf{\ref{r:r2b}}: Risk-score quality}
\label{sec:calibration}
Similar to past works \citep{guoCalibrationModernNeural2017b}, we sort task instances $\task$ by uncertainty metric $\metric(\model,\task)$ and put them into equally-sized bins $b_j$ (results are robust to bin size, see \cref{app:correctness}). Each bin is associated with its highest metric value $\metric_j$, and the average error $\hat r_j := \frac{1}{|b_j|}\sum_{t\in b_j}\mathbb{I}\{a(M,t)=0\}$.

\textbf{Calibration Pearson Correlation (CPC)(\textbf{\ref{r:r2a}}).} 
We define CPC score as the Pearson correlation between $\metric_j$ and $\hat r_j$ across bins (higher is better). This quantifies how close the reliability curve is to an affine trend, satisfying \ref{r:r2a}.
\Cref{tab:r12_results_rotate} shows that \algname \emph{consistently performs better than baselines across most settings}, achieving an average CPC of $\sim 0.90$ across all modality tasks (\ref{r:generalizability}), more than $11\%$ higher than the next best metric. Note that \algname also produces \emph{more stable and reliable results with consistently high CPC}, unlike other baselines with performance that fluctuates greatly depending on the specific task.

\textbf{Expected Calibration Error (ECE) (\textbf{\ref{r:r2b}}).} The strong linear relationship indicated by \algname's CPC score suggests that a simple scaling process could allow \algname to empirically track approximate model error well and satisfy \textbf{\ref{r:r2b}}. 
We evaluate the ECE \citep{guoCalibrationModernNeural2017b} of metrics by using an \emph{unlabeled} development set of instances (5\% of dataset) to compute $\tilde{\metric}$ via min-max scaling before computing the ECE. Intuitively, development sets that contain both very hard ($\tilde{\metric}\approx1$) and easy ($\tilde{\metric}\approx0$) instances could help scale a strongly linear metric to satisfy \textbf{\ref{r:r2b}}.
\algname achieves a very low ECE on almost all datasets with an average of $0.062$ (see \Cref{tab:r12_results_rotate})
while \eigen and \se suffers from severe miscalibration.

\begin{table}[t]
\caption{Pearson Correlation between uncertainty metrics and CLIP/CLAP score for image [I] and audio [A] generation tasks. \algname consistently achieves the highest correlation.}
\label{tab:image_generation}
\centering
\footnotesize
\resizebox{\columnwidth}{!}{%
\begin{tabular}{l|cccc}
\toprule
 & PUNC & \lne & Eigen & \textbf{Ours} \\
\midrule
\midrule
\textbf{[I]} AnyGPT & 44.0{\scriptsize \textsubscript{±2.8}} & 16.6{\scriptsize \textsubscript{±9.8}} & 33.0{\scriptsize \textsubscript{±1.6}} & \textbf{81.5{\scriptsize \textsubscript{±3.4}}} \\
\textbf{[I]} NExTGPT & 45.3{\scriptsize \textsubscript{±2.9}} & 36.5{\scriptsize \textsubscript{±12.7}} & 61.6{\scriptsize \textsubscript{±1.7}} & \textbf{69.2{\scriptsize \textsubscript{±9.7}}} \\
\textbf{[A]} NExTGPT & - & 74.1{\scriptsize \textsubscript{±14.2}} & 50.4{\scriptsize \textsubscript{±40.9}} & \textbf{75.9{\scriptsize \textsubscript{±7.5}}} \\
\bottomrule
\end{tabular}
}
\vskip -0.1in
\end{table}

\begin{figure}[ht]
    \begin{center}
\includegraphics[width=\linewidth]{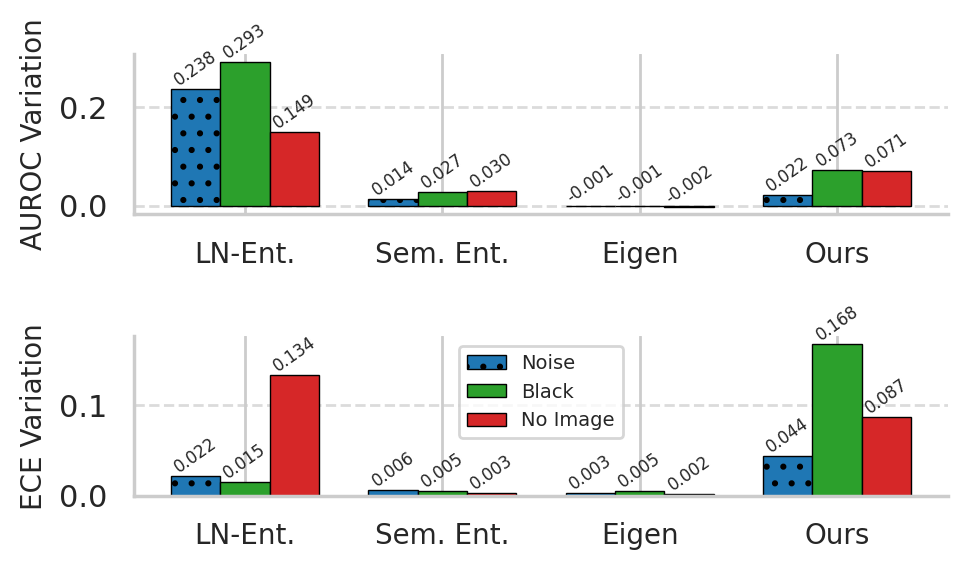}
    \end{center}
    \caption{Multimodal coherence \ref{r:coherence}: Decrease in AUROC, and ECE when image-input information is (1) corrupted with noise, (2) replaced with a black image, or (3) removed.}
\label{fig:r4_r5}
\end{figure}

\subsection{\ref{r:generalizability},\ref{r:coherence},\ref{r:efficiency}: Design desiderata}
\label{sec:design_desiderata}

\textbf{Multimodal generalizability (\ref{r:generalizability}).} As seen in \cref{tab:r12_results_rotate}, \algname, without modality-specific modifications or external tools, \emph{consistently performs well in effectiveness} (\ref{r:classify}-\textbf{\ref{r:r2b}}) across multiple input modality tasks (image-text, audio-text, video-text) and hence satisfy \ref{r:generalizability}, empirically supporting our approach in using MLLMs' inherent multimodality capabilities to achieve \ref{r:generalizability}. In contrast, metrics that rely on modality-specific input, such as \cu, cannot be directly applied to other input modalities to satisfy \ref{r:generalizability}. The other baselines (\lne, \se, \eigen) do not explicitly consider multimodal input and could be adapted to the various tasks, but consistently perform worse than \algname.

\textbf{Non-text output generation tasks (\ref{r:generalizability}').} 
To test \ref{r:generalizability}', we also 
ran experiments on image--MS-COCO caption \cite{capeval2015}, and audio--AudioCaps \cite{kim-etal-2019-audiocaps} generation using any-to-any MLLMs NExT-GPT \citep{wu24next} and AnyGPT \citep{zhan-etal-2024-anygpt}.
We evaluate the uncertainty metrics on \ref{r:r2a} based on the Pearson correlation between the metrics and the negative of MLLM responses' continuous quality scores (image: CLIP score \citep{hessel-etal-2021-clipscore}; audio: CLAP score \citep{elizalde2023clap}), similar to how CPC is computed for metrics and estimated instance error in \cref{sec:calibration}.
\se requires text-specific processing with external tools and cannot be applied. For these settings, we also compare with PUNC \citep{franchi2025towards}, an image generation-specific uncertainty metric. \Cref{tab:image_generation} shows that \emph{\algname has consistently strong correlation with image and audio quality, outperforms all baselines} across different modalities, and hence satisfies \ref{r:generalizability}'. This allows \algname to assess whether a task instance might be challenging for MLLMs to produce high quality image/audio responses to, which to our knowledge has not been well-studied.

\textbf{Multimodal coherence (\ref{r:coherence}).} To further demonstrate that \algname considers multimodality information despite not using modality-specific external tools or methods, we analyze post-generation whether metric performance degrades when image-input information is (1) corrupted with noise, (2) replaced with a black image, or (3) removed. A metric that satisfies \ref{r:coherence} should show performance degradation for (1), which worsens for (2) and (3), and is comparable between (2) and (3) since all useful signals would be removed in both cases. 
\Cref{fig:r4_r5} \emph{shows that \algname exhibits this behavior well and satisfies \ref{r:coherence}}, along with \se to a lesser extent. In contrast, among other input modality-agnostic baselines, \lne exhibits large degradation but with inconsistent trends (e.g., no image has less degradation than noisy image), and \eigen is nearly invariant to image removal, indicating that it measures response dispersion without considering multimodal coherence.

\textbf{Computational efficiency (\ref{r:efficiency}).} Both \cu and \se perform poorly in \ref{r:efficiency} as they require expensive computation (e.g., pairwise response evaluation), external tools and additional training (for \cu). Compared to \algname and other baselines, they take up to $1000\times$ more compute overhead time (i.e., over MLLM sampling cost), taking ${\sim} 9$s/sample v.s. ${\sim} 8\mathrm{e}{-4}$s/sample) as shown in \Cref{fig:run_time_num_gen}(a), \cref{app:efficiency}. We also show that \algname outperforms baselines across various sampling budget $k$ (\Cref{fig:run_time_num_gen}(b), \Cref{app:efficiency}).
While single-sample methods avoid incurring MLLM sampling costs, we show that \algname achieves significant performance margins over them (\Cref{app:singlesample}) even with small sampling budget $k=5$ (which can be sped up by accelerated LLM batch inference \citep{kwonEfficientMemoryManagement2023}).

\subsection{Practical applications}
\label{sec:effective_practical}

\textbf{Selective answering.} \label{sec:aurac}
We consider a practical scenario where a user has a small local MLLM for a task, and a limited budget to route/escalate the most uncertain task instances to a more capable MLLM or human expert. We evaluate the performance of uncertainty metrics in this setting via the Area Under the Rejection-Accuracy Curve (AURAC) \citep{hullermeierAleatoricEpistemicUncertainty2021}, 
which measures the accuracy gain achieved when instances are progressively rejected based on their uncertainty rankings.
As can be seen in \Cref{tab:auc_arc}, \algname \emph{consistently achieves the highest AURAC across all datasets.} 
In \Cref{app:statistical-t-test}, we also present results of statistical tests across various MLLM model-dataset combinations, demonstrating that \algname's performance gains over baselines are statistically significant.

\begin{table}[ht]
\caption{Comparison of AURAC across datasets for different uncertainty metrics, including baselines and \algname.}
\label{tab:auc_arc}
\centering\footnotesize
\setlength{\tabcolsep}{2pt}
\resizebox{\linewidth}{!}{%
\begin{tabular}{l|ccccc}
\toprule
Dataset & NC & \lne & \se & Eigen & Ours \\
\midrule\midrule
\textbf{[I]} VQAv2 & 94.2{\scriptsize \textsubscript{±0.3}} & 95.6{\scriptsize \textsubscript{±0.2}} & 95.4{\scriptsize \textsubscript{±0.1}} & \underline{96.8{\scriptsize \textsubscript{±0.1}}} & \textbf{97.0{\scriptsize \textsubscript{±0.1}}} \\ \textbf{[I]} OKVQA & 70.6{\scriptsize \textsubscript{±0.6}} & 81.2{\scriptsize \textsubscript{±0.3}} & 79.7{\scriptsize \textsubscript{±0.3}} & \underline{82.5{\scriptsize \textsubscript{±0.2}}} & \textbf{83.0{\scriptsize \textsubscript{±0.2}}} \\ \textbf{[I]} AdVQA & 73.1{\scriptsize \textsubscript{±0.4}} & 76.1{\scriptsize \textsubscript{±0.7}} & 79.2{\scriptsize \textsubscript{±0.4}} & \underline{80.8{\scriptsize \textsubscript{±0.4}}} & \textbf{81.4{\scriptsize \textsubscript{±0.5}}} \\ \textbf{[I]} MathVista & 35.0{\scriptsize \textsubscript{±0.2}} & 31.2{\scriptsize \textsubscript{±0.1}} & 38.5{\scriptsize \textsubscript{±0.6}} & \underline{39.4{\scriptsize \textsubscript{±0.1}}} & \textbf{39.8{\scriptsize \textsubscript{±0.1}}} \\ \textbf{[I]} VQA-RAD & 49.3{\scriptsize \textsubscript{±2.3}} & 52.4{\scriptsize \textsubscript{±0.3}} & 54.7{\scriptsize \textsubscript{±0.9}} & \underline{58.6{\scriptsize \textsubscript{±1.1}}} & \textbf{60.2{\scriptsize \textsubscript{±0.8}}} \\ \midrule \rowcolor[gray]{0.9}\textbf{Avg. [I]} & 64.4{\scriptsize \textsubscript{±0.8}} & 67.3{\scriptsize \textsubscript{±0.3}} & 69.5{\scriptsize \textsubscript{±0.5}} & \underline{71.6{\scriptsize \textsubscript{±0.4}}} & \textbf{72.3{\scriptsize \textsubscript{±0.4}}} \\ \midrule \textbf{[A]} SLUE & - & 86.7{\scriptsize \textsubscript{±0.0}} & 86.3{\scriptsize \textsubscript{±0.5}} & \underline{86.9{\scriptsize \textsubscript{±0.1}}} & \textbf{87.8{\scriptsize \textsubscript{±0.1}}} \\ \textbf{[A]} SpokenSQ. & - & 82.6{\scriptsize \textsubscript{±0.0}} & 83.4{\scriptsize \textsubscript{±0.3}} & \underline{84.0{\scriptsize \textsubscript{±0.0}}} & \textbf{84.7{\scriptsize \textsubscript{±0.1}}} \\ \textbf{[V]} VidMME & - & 22.6{\scriptsize \textsubscript{±0.7}} & 24.6{\scriptsize \textsubscript{±0.5}} & \underline{25.3{\scriptsize \textsubscript{±0.4}}} & \textbf{26.2{\scriptsize \textsubscript{±0.2}}} \\ \midrule \rowcolor[gray]{0.9}\textbf{Avg. [All]} & - & 66.0{\scriptsize \textsubscript{±0.3}} & 67.7{\scriptsize \textsubscript{±0.5}} & \underline{69.3{\scriptsize \textsubscript{±0.3}}} & \textbf{70.0{\scriptsize \textsubscript{±0.3}}} \\
\bottomrule
\end{tabular}
}
\end{table}

\begin{figure*}[t]
    \centering
    \includegraphics[width=\linewidth]{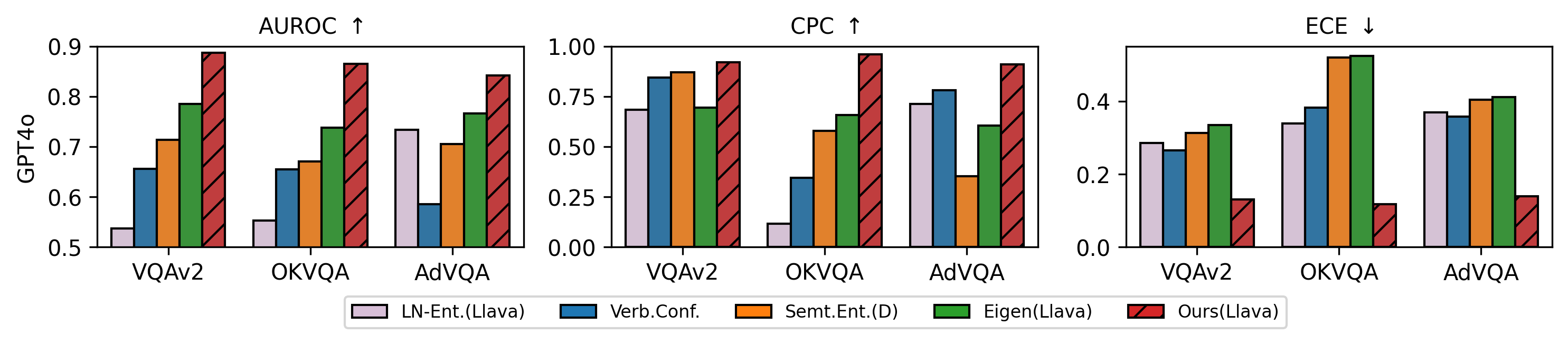}
    \caption{Performance of uncertainty metrics in blackbox settings across image-text QA datasets. \se (D) indicates its discrete version, (Llava) indicates Llava as the white-box proxy model.
    }
    \label{fig:blackbox}
\end{figure*}

\textbf{Blackbox Models.}
\label{sec:blackbox}
While blackbox MLLMs do not provide internal embeddings, we could still apply \algname by using a much smaller whitebox proxy MLLM to process the blackbox model's responses to generate embeddings and probabilities for computing our metric. While incurring inference cost of the small model, it enables uncertainty quantification for closed API. This approach relies on \algname being robust to noise/distortions to input $(\phi_i,p_i)$ and performant MLLMs sharing sufficiently similar multimodal features. Our empirical results (\Cref{fig:blackbox}) demonstrates this.
For this setting, we also compared with the Verbalized Confidence (Verb.Conf) baseline \citep{xiong2024llmsexpressuncertaintyempirical} that can be run with SOTA blackbox API models. 
In \Cref{fig:blackbox}, we see that \algname consistently and significantly outperform baselines when assessing GPT4o's \citep{hurst2024gpt}
uncertainty on VQAv2, OKVQA, and AdVQA using \llava 
as the whitebox proxy model. Experiments with other blackbox (e.g., Claude, GPT4o Mini) and different proxy models show similar results (\Cref{app:blackbox}).

\section{Related Works}
\label{sec:related_works}
\textbf{Modality-specific methods.} While MLLMs' hallucination and miscalibration problems are well known \citep{chenUnveilingUncertaintyDeep2025, chairrohrbach2018object, baiHallucinationMultimodalLarge2024}, task instance-specific uncertainty quantification for MLLMs remains relatively underdeveloped. Most works are focused on the `image-text input, text output' modality, with image-specific approaches and external tools, violating \ref{r:generalizability}. This includes works relying on the use of external reference/entailment models \citep{zhangVLUncertaintyDetectingHallucination2024, mhalsun2023aligning, gaiveliu2023aligning}, supervised training of classifiers \citep{liReferencefreeHallucinationDetection2024}, or large numbers of modality-specific query perturbations \citep{khan2024consistency,zhangVLUncertaintyDetectingHallucination2024} to test model consistency (also violating \ref{r:efficiency}). Even with additional compute or access to external tools, these methods also tend to underperform \algname in (\ref{r:classify}-\textbf{\ref{r:r2b}}), as shown in \cref{sec:exp}.

\textbf{LLM uncertainty methods.} While designed only for text-input settings, we found that some LLM uncertainty metrics could be adapted for multimodal input and potentially achieve better effectiveness (e.g., \ref{r:classify} on discrimination) compared to modality-specific methods (see \cref{sec:exp}). This includes methods based on lexical (token probability) distributions \citep{malinin2021uncertainty}, semantic clusters/graphs derived from external text entailment models \citep{kuhnSemanticUncertaintyLinguistic2023, nikitinKernelLanguageEntropy2024,lin2024generating}, semantic embeddings of sampled responses \citep{chenLLMsInternalStates2024, qiu2024semantic}, and prompting \citep{xiong2024llmsexpressuncertaintyempirical}.
However, these approaches still typically violate several desiderata (e.g., \ref{r:coherence} by not considering response incoherence with multimodal input) and consistently underperforms \algname. Compared to other forms of response probability-based uncertainty metrics \citep{malinin2021uncertainty}, our incoherence score/quadratic entropy $\quadent$ approach is novel and yields consistent performance gains over baselines, which widens further when augmented with $\unadjmethod$.

Many MLLM and LLM uncertainty works rely on computing discrete entropy measures \citep{malinin2021uncertainty,nikitinKernelLanguageEntropy2024,zhangVLUncertaintyDetectingHallucination2024}. However, it is unclear how to compare entropy values across different support sets (e.g., distributions defined on 2 versus 5 classes), especially when the support set is determined by external models, making them potentially hard to use in practice. 
\eigen \citep{chenLLMsInternalStates2024} considers differential entropy in the sentence embedding space under a Gaussian assumption, resulting in a log determinant form that bears some similarity to our unadjusted semantic volume term $\unadjmethod_\task$. In contrast, \algname has key advantages including (i) incorporates probability-based incoherence scores through a DPP-inspired quality-diversity kernel rather than assuming that all responses have equal quality, which also provides important multimodal coherence signals for \ref{r:coherence}, and (ii) provides interpretation of semantic volume and quadratic entropy that naturally motivates the form of the metric without requiring strong distributional assumptions, among others (\cref{app:eigen}). Empirically, we see \algname consistently outperforming \eigen across tasks (\Cref{tab:r12_results_rotate}).

\section{Conclusion}
\label{sec:conclusion}
We propose \algname, an inference-time framework for MLLM uncertainty without the need for modality-specific tools.
By weighting semantic volume with probability-based incoherence scores, \algname considers both semantic diversity and quadratic entropy of MLLM responses. Empirically, \algname outperforms baselines across image, audio and video QA benchmarks, and uncertainty estimation for blackbox models via whitebox proxies.
Future work could further improve \algname for longer-generation multimodal tasks such as reasoning, and extend it to applications requiring uncertainty quantification such as active learning.

\section*{Impact Statement}

This paper presents work whose goal is to advance the field of Machine Learning. There are many potential societal consequences of our work, none which we feel must be specifically highlighted here.

\section*{Acknowledgments}
We thank Pang Wei Koh for the helpful discussions. This research/project is supported by the National Research Foundation, Singapore under its AI Singapore Programme (AISG Award No: AISG2-PhD/2023-01-039J) and is part of the programme DesCartes which is supported by the National Research Foundation, Prime Minister’s Office, Singapore under its Campus for Research Excellence and Technological Enterprise (CREATE)
programme.

\newpage

\small
\bibliography{hallucination,LLMAgents,references}
\bibliographystyle{icml2025}

\newpage
\appendix
\onecolumn

\normalsize

\section{Theoretical analysis and intuition}
\label{app:theory_result}

In this section, we provide some theoretical analysis on the \algname metric under simplifying assumptions, to help build intuition for why the metric has strong empirical results that we presented in \cref{sec:exp}.

\subsection{Notation and problem setup}
\label{app:theory_setup}

We first recap the problem setup. For a given task instance $\task \in \taskspace$, let $Y\sim \mathbb{P}_\model(\cdot\mid q_t)$ be i.i.d sampled responses from the model's conditional probability\footnote{For notational simplicity, we drop the subscript $\task$ from $Y_\task$ which depends on the task instance $t$.}. Let $\phi_i \in \mathbb{R}^d$ denote the normalized (we enforce $\|\phi_i\|_2 =1$) embedding (row vector) for sampled response $y_i$, and $\Phi \in \mathbb{R}^{k \times d}$ with rows $\phi_1^\top, \dots, \phi_k^\top$ be the semantic embedding matrix. Let $p_i \in (0, 1]$ be the model-generated probability for sampled response $y_i$. We define the incoherence scaling matrix be $C = \mathrm{diag}(c_1, \dots, c_k)$ with $c_i := \exp(\alpha(1 - p_i))$, for scalar $\alpha \geq 0$. Our proposed \algname score (\cref{eq:umpire}), inspired by quality-diversity kernels used in DPP, is
$$
\method_\task =  \frac{1}{2k}  \log \det \left [ C_\task \left ((\Phi_\task\Phi_\task^\top+ \epsilon I_k) \right ) C_\task \right ],
$$
where $\epsilon >0$ is a jitter term.

\paragraph{\algname metric decomposition.} We first provide a basic derivation of how the incoherence-adjusted semantic volume presented as \cref{eq:umpire} can be expressed as \cref{eq:casev} in \cref{sec:method}.

We define the unadjusted semantic volume term
\begin{equation}
\unadjmethod_\task \coloneqq \frac{1}{2k}\log \det(\Phi_\task\Phi_\task^\top + \epsilon I_k),
\label{eq:app:V_tilde}
\end{equation}
and the empirical incoherence average
\begin{equation}
\quadent_\task \coloneqq \frac{1}{k}\sum_{i=1}^k (1-p_i)\in 0,1].
\label{eq:app:qbar}
\end{equation}

\begin{proposition}[\algname decomposition]
\label{prop:app:decomp}
For $\method_\task$ defined in \cref{eq:umpire}, we have
\begin{equation}
\method_\task = \unadjmethod_\task + \alpha \quadent_\task
\label{eq:app:decomp}
\end{equation}
\end{proposition}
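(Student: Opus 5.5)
The plan is to use multiplicativity of the determinant together with the fact that $C_\task$ is diagonal with strictly positive entries. Write $G_\task \coloneqq \Phi_\task\Phi_\task^\top + \epsilon I_k$. The first step is to check that every log-determinant we take is well defined. $G_\task$ is symmetric, and for any $x\in\mathbb{R}^k$ we have $x^\top G_\task x = \|\Phi_\task^\top x\|_2^2 + \epsilon\|x\|_2^2 > 0$ whenever $x\neq 0$, so $G_\task$ is positive definite and $\det G_\task>0$. Likewise each $c_i=\exp(\alpha(1-p_i))>0$, so $C_\task$ is invertible with $\det C_\task=\prod_i c_i>0$. It follows that $C_\task G_\task C_\task$ is again positive definite (a congruence transform of a PD matrix by an invertible matrix), so $\log\det$ of it is finite and real.

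The second step is the factorization. We have $\det(C_\task G_\task C_\task)=\det(C_\task)\det(G_\task)\det(C_\task)=\det(G_\task)\prod_{i=1}^k c_i^2$. Taking logarithms gives
\begin{equation*}
\log\det(C_\task G_\task C_\task)=\log\det G_\task + 2\sum_{i=1}^k \alpha(1-p_i).
\end{equation*}

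Finally, dividing by $2k$ turns the first term into $\unadjmethod_\task$ as defined in \cref{eq:app:V_tilde}. The second term becomes $\frac{\alpha}{k}\sum_i(1-p_i)=\alpha\quadent_\task$ by \cref{eq:app:qbar}, which gives \cref{eq:app:decomp}.

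There is no real obstacle. The only point needing care is the positivity argument in the first step, which justifies writing the logarithm of each determinant factor separately rather than only the logarithm of the product. As an aside, since $p_i\in(0,1]$ we have $\quadent_\task\in[0,1)$, though this is not needed for the identity itself.
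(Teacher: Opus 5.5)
Your proof is correct and follows essentially the same route as the paper: factor $\det(C_\task G_\task C_\task)=\det(G_\task)\det(C_\task)^2$, use $\log\det C_\task=\alpha\sum_{i=1}^k(1-p_i)$, and divide by $2k$. Your added check that $G_\task$ is positive definite and each $c_i>0$ is a small step the paper leaves implicit, and it is what makes the separate logarithms well defined.
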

\begin{proof}
As $C$ and $(\Phi_\task \Phi_\task^\top + \epsilon I_k)$ are $k \times k$ square matrices, we have
$$
\det(C (\Phi_\task \Phi_\task^\top + \epsilon I_k) C) = \det(C) \det((\Phi_\task \Phi_\task^\top + \epsilon I_k)) \det(C) =  \det((\Phi_\task \Phi_\task^\top + \epsilon I_k))\det(C)^2,
$$
hence
$$
\log\det(C (\Phi_\task \Phi_\task^\top + \epsilon I_k) C) = \log\det((\Phi_\task \Phi_\task^\top + \epsilon I_k)) + 2\log\det(C).
$$
Substituting $C$ which is a diagonal matrix with $c_i = \exp(\alpha(1 - p_i))$, we have
$$
2\log\det(C) = 2\sum_{i=1}^{k} \log c_i = 2\alpha \sum_{i=1}^{k} (1 - p_i).
$$
We can obtain the final claimed decomposition by dividing throughout by $2k$.
\end{proof}

\paragraph{Re-expression of $\widetilde{V}_\task$ with empirical second moments.}
Next, we re-express $\widetilde{V}_\task$ in terms of an empirical second moment term that enables more convenient analysis. 

\begin{lemma}[Second moment form of semantic volume]
\label{lem:app:detlemma}
Let $S_k := \frac{1}{k}\Phi_\task^\top\Phi_\task \in \mathbb{R}^{d\times d}$. Then
\begin{equation}
\det(\Phi_\task\Phi_\task^\top + \epsilon I_k)
= \epsilon^k \det\!\Big(I_d + \frac{k}{\epsilon} S_k\Big),
\label{eq:app:detlemma}
\end{equation}
and therefore
\begin{equation}
\unadjmethod_\task
= \frac{1}{2}\log \epsilon
+ \frac{1}{2k}\log\det\!\Big(I_d + \frac{k}{\epsilon} S_k\Big),
\label{eq:app:VvolSk}
\end{equation}
where $I_k$ and $I_d$ are $k \times k$ and $d \times d$ identity matrices.
\end{lemma}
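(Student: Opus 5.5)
The natural route is Sylvester's determinant identity, $\det(I_k + AB) = \det(I_d + BA)$ for $A \in \mathbb{R}^{k\times d}$ and $B \in \mathbb{R}^{d\times k}$. I would first factor out the jitter from the $k\times k$ matrix. Since $\epsilon>0$, we have $\Phi_\task\Phi_\task^\top + \epsilon I_k = \epsilon\,(I_k + \epsilon^{-1}\Phi_\task\Phi_\task^\top)$. Taking determinants of a $k\times k$ matrix scaled by $\epsilon$ gives
\begin{equation*}
\det(\Phi_\task\Phi_\task^\top + \epsilon I_k) = \epsilon^k \det\!\Big(I_k + \tfrac{1}{\epsilon}\Phi_\task\Phi_\task^\top\Big).
\end{equation*}

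Next I would apply Sylvester's identity with $A = \epsilon^{-1}\Phi_\task$ (size $k\times d$) and $B = \Phi_\task^\top$ (size $d\times k$). This converts the $k\times k$ determinant into the $d\times d$ determinant $\det(I_d + \epsilon^{-1}\Phi_\task^\top\Phi_\task)$. The definition $S_k = \frac{1}{k}\Phi_\task^\top\Phi_\task$ gives $\Phi_\task^\top\Phi_\task = kS_k$, so this equals $\det(I_d + \frac{k}{\epsilon}S_k)$, which is \cref{eq:app:detlemma}.

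If a self-contained justification of Sylvester's identity is wanted, I would use the block-matrix argument: $\begin{pmatrix} I_k & -A\\ B & I_d\end{pmatrix}$ has determinant $\det(I_d + BA)$ by a Schur complement on the $I_k$ block, and $\det(I_k + AB)$ by a Schur complement on the $I_d$ block. Alternatively, both $\Phi_\task\Phi_\task^\top$ and $\Phi_\task^\top\Phi_\task$ share the same nonzero eigenvalues (the squared singular values of $\Phi_\task$), and the remaining eigenvalues contribute factors of $1$.

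For \cref{eq:app:VvolSk}, I would take logarithms, which is legitimate because both sides are positive: $I_d + \frac{k}{\epsilon}S_k$ is positive definite since $S_k$ is positive semidefinite. Then I would divide by $2k$ and use the definition \cref{eq:app:V_tilde}; the term $\frac{1}{2k}\log\epsilon^k$ simplifies to $\frac12\log\epsilon$. There is no real obstacle. The only points needing care are tracking dimensions correctly in Sylvester's identity and noting positivity so the logarithm is well defined. Neither the assumption $d>k$ nor the unit-norm rows is needed here.
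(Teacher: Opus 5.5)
Your proof is correct and matches the paper's argument. The paper also factors out $\epsilon^k$ and applies $\det(I_k+AB)=\det(I_d+BA)$, substitutes $\Phi_\task^\top\Phi_\task = kS_k$, and takes logarithms; the only cosmetic difference is that it splits the scaling symmetrically as $A=\epsilon^{-1/2}\Phi_\task$, $B=\epsilon^{-1/2}\Phi_\task^\top$. Your positivity remark and your self-contained justification of Sylvester's identity are correct additions that the paper omits.
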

\begin{proof}
Apply the matrix determinant lemma $\det(I_k + AB)=\det(I_d + BA)$ with
$A=\epsilon^{-1/2}\Phi_\task$ and $B=\epsilon^{-1/2}\Phi_\task^\top$:
\[
\det(\Phi_\task\Phi_\task^\top+\epsilon I_k)
=\epsilon^k \det\!\Big(I_k + \frac{1}{\epsilon}\Phi_\task\Phi_\task^\top\Big)
=\epsilon^k \det\!\Big(I_d + \frac{1}{\epsilon}\Phi_\task^\top\Phi_\task\Big).
\]
Substitute $\Phi_\task^\top\Phi_\task = kS_k$ to obtain~\eqref{eq:app:detlemma} and~\eqref{eq:app:VvolSk}.
\end{proof}

\subsection{Quadratic-entropy interpretation of the incoherence term}
\label{app:theory:quadratic}

We start by discussing how $\quadent_\task$ can be interpreted. 

\begin{definition}[Quadratic entropy]
\label{def:app:H2}
Let $\mathbb{P}_\model(\cdot\mid \query_t)$ denote the model distribution over discrete responses $Y$ for a fixed instance $t$. We define the quadratic entropy 

\begin{equation}
H_2(Y) \coloneqq 1 - \sum_y \mathbb{P}_\model(y\mid \query_t)^2.
\label{eq:app:H2}
\end{equation}
\end{definition}

Note that if $Y, Y' \stackrel{iid}{\sim} \mathbb{P}(\cdot \mid \query_t)$, $\mathbb{P}(Y=y') = \sum_y \mathbb{P}_\model(y\mid \query_t)^2$. Hence, $H_2(Y)$ can also be interpreted as the probability that two sampled responses from $\mathbb{P}_\model$ do not match.

\begin{lemma}[Incoherence term as Monte Carlo estimate of $H_2$]
\label{lem:app:QasH2} Let $Y \stackrel{iid}{\sim} \mathbb{P}(\cdot \mid \query_t)$ and $p_i = \mathbb{P}(y_i\mid \query_t)$ for each sampled $y_i$. The incoherence term $\quadent_\task=\frac{1}{k}\sum_i (1-p_i)$ is an unbiased Monte Carlo estimator of $H_2(Y)$.
\end{lemma}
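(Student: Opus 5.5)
The argument is a direct expectation computation, using linearity and the i.i.d.\ sampling assumption from step U1. Define the function $f(y) \coloneqq 1-\mathbb{P}_\model(y\mid \query_t)$ on the discrete response space. Each incoherence value can then be written as $1-p_i = f(Y_i)$, where $Y_1,\dots,Y_k \stackrel{iid}{\sim}\mathbb{P}_\model(\cdot\mid\query_t)$. The only subtlety is exactly this point: $p_i$ is itself random, because it is the model probability evaluated at the random draw $Y_i$. It is not a fixed constant, so the reparameterization through $f$ must be made explicit before taking expectations.

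By linearity of expectation and identical distribution,
\[
\mathbb{E}[\quadent_\task] = \frac{1}{k}\sum_{i=1}^k \mathbb{E}[f(Y_i)] = \mathbb{E}_{Y\sim\mathbb{P}_\model(\cdot\mid\query_t)}[f(Y)].
\]
We then evaluate this single expectation as a sum over the discrete support:
\[
\mathbb{E}[f(Y)] = \sum_y \mathbb{P}_\model(y\mid\query_t)\bigl(1-\mathbb{P}_\model(y\mid\query_t)\bigr) = 1-\sum_y \mathbb{P}_\model(y\mid\query_t)^2 ,
\]
using $\sum_y \mathbb{P}_\model(y\mid\query_t)=1$. The right-hand side is $H_2(Y)$ by Definition~\ref{def:app:H2}, so $\quadent_\task$ is unbiased for $H_2(Y)$.

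One technical point needs care: the response space of token sequences may be countably infinite. The sum is still well defined, because every term lies in $[0,1]$ and $\sum_y \mathbb{P}_\model(y\mid\query_t)^2 \le \sum_y \mathbb{P}_\model(y\mid\query_t) = 1$. Both series therefore converge absolutely, and splitting the sum is justified. Optionally, we would add that since $f\in[0,1]$, the estimator's variance is at most $1/(4k)$. This observation supports the later concentration discussion but is not needed for unbiasedness.
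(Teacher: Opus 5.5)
Your proof is correct and follows the paper's argument: you compute $\mathbb{E}[1-\mathbb{P}_\model(Y\mid\query_t)]=\sum_y \mathbb{P}_\model(y\mid\query_t)\bigl(1-\mathbb{P}_\model(y\mid\query_t)\bigr)=H_2(Y)$ and then use linearity of expectation with the i.i.d.\ sampling to conclude that the sample average is unbiased. The extra remarks — absolute convergence over a countably infinite response space, and the variance bound of at most $1/(4k)$ — are valid refinements that the paper's proof leaves out.
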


\begin{proof}
Note that 
\[
\mathbb{E}\big[1-\mathbb{P}(Y\mid \query_t)\big]
=\sum_y \mathbb{P}(y\mid \query_t)\big(1-\mathbb{P}(y\mid \query_t)\big)
=1-\sum_y \mathbb{P}(y\mid \query_t)^2
=H_2(Y).
\]
Since $Y\stackrel{iid}{\sim} \mathbb{P}(\cdot \mid \query_t)$ and $p_i = \mathbb{P}(y_i\mid \query_t)$ for each sampled $y_i$, $\quadent_\task=\frac{1}{k}\sum_i (1-p_i)$ is an unbiased empirical Monte Carlo estimate of $\mathbb{E}\big[1-\mathbb{P}(Y\mid \query_t)\big]$ and hence of $H_2(Y)$.
\end{proof}

\subsection{Small $Q$ rules out multiple high-probability semantic modes}\label{app:smallQ-modes}

Intuitively, as described in \cref{sec:prac_consi}, it is clear that small $\quadent_\task$ can help rule out the presence of multiple high probability semantic modes in the MLLM response distribution for a given task instance $\task$. Here, we make things more explicit and formalize this with a few basic derivations. 

Let $Y\sim P(\cdot\mid \query_t)$ be the sampled MLLM response, and let $Z=g(Y)\in\{1,\ldots,m\}$ be a mapping that assigns each
response to a semantic mode or cluster. We denote the probability mass associated with each mode as $w_j:=\mathbb{P}(Z=j)$, and correspondingly the quadratic entropy over semantic modes as 

\[
H_2(Z) \coloneqq 1-\sum_{j=1}^m w_j^2 = \mathbb{P}(Z\neq Z'),
\]
where $Z,Z'$ are i.i.d. samples.

\begin{lemma}[Coarsening decreases quadratic entropy]
\label{lem:app:coarsen}
For any mapping $Z=g(Y)$, we have $H_2(Z)\le H_2(Y)$.
\end{lemma}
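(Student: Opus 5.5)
The plan is to compare the collision probabilities $\sum_j w_j^2$ and $\sum_y \mathbb{P}_\model(y\mid \query_t)^2$ directly. By \cref{def:app:H2} and the definition of $H_2(Z)$, the claim $H_2(Z)\le H_2(Y)$ is equivalent to
\[
\sum_{j=1}^m w_j^2 \;\ge\; \sum_y \mathbb{P}_\model(y\mid \query_t)^2 .
\]
We organize the responses by their mode. Let $A_j \coloneqq g^{-1}(j)$ be the set of responses mapped to mode $j$. These sets partition the response space, and $w_j=\sum_{y\in A_j}\mathbb{P}_\model(y\mid \query_t)$.

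\textbf{Step 1: bound each block.} Fix $j$. Since all terms are nonnegative, expanding the square gives
\[
w_j^2=\Big(\sum_{y\in A_j}\mathbb{P}_\model(y\mid \query_t)\Big)^2=\sum_{y\in A_j}\mathbb{P}_\model(y\mid \query_t)^2+\sum_{y\neq y'\in A_j}\mathbb{P}_\model(y\mid \query_t)\,\mathbb{P}_\model(y'\mid \query_t)\;\ge\;\sum_{y\in A_j}\mathbb{P}_\model(y\mid \query_t)^2 .
\]

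\textbf{Step 2: sum over modes.} Summing this bound over $j=1,\dots,m$ and using that the $A_j$ partition the responses yields the desired inequality between collision probabilities. Subtracting both sides from $1$ then gives $H_2(Z)\le H_2(Y)$.

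As a cross-check, there is a shorter probabilistic route through the collision interpretation stated after \cref{def:app:H2}. Take $Y,Y'$ i.i.d. and set $Z=g(Y)$, $Z'=g(Y')$. Since $g$ is a function, the event $\{Y=Y'\}$ is contained in $\{Z=Z'\}$. Hence $\mathbb{P}(Z\neq Z')\le \mathbb{P}(Y\neq Y')$, which is exactly the claim.

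The only point needing care is the case of countably infinite response sets. There the rearrangement of the double sum is justified because the series is absolutely convergent with nonnegative terms, so Tonelli's theorem applies. The collision argument avoids this issue entirely, so I would present it as the main proof and keep the algebraic version as a remark.
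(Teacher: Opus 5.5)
Your Steps 1--2 are exactly the paper's proof. You partition responses into the preimages $g^{-1}(j)$, expand $w_j^2$, drop the nonnegative cross terms, and sum over modes. The coupling argument you intend to lead with is a genuinely different route, and it is also correct. The paper does not use it in its proof, although it sets it up: right after the definition of $H_2$ it notes that $H_2$ is the non-collision probability, and it writes $H_2(Z)=\mathbb{P}(Z\neq Z')$. Because $g$ is deterministic, $(g(Y),g(Y'))$ is an i.i.d.\ pair with law $(w_j)_j$, so $\mathbb{P}(Z=Z')=\sum_j w_j^2$. The inclusion $\{Y=Y'\}\subseteq\{Z=Z'\}$ then finishes the proof.

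What the coupling version buys is brevity. It needs nothing beyond countable additivity, so no rearrangement of double series is required. It also reads the slack transparently: $H_2(Y)-H_2(Z)=\mathbb{P}(Z=Z',\,Y\neq Y')$, the probability that two distinct sampled responses land in the same semantic mode. The algebraic route gives this same quantity explicitly as the discarded cross terms, $\sum_j\sum_{y\neq y'\in g^{-1}(j)} p_y\,p_{y'}$ with $p_y:=\mathbb{P}_{\mathcal{M}}(y\mid q_t)$. So the two proofs are one identity read two ways: as an event inclusion, or as a concrete sum one could estimate if needed. Your Tonelli remark for countable response sets is correct, though the paper leaves that point implicit.
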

\begin{proof}
Let $\mathcal{C}_j=\{y:g(y)=j\}$ so that $w_j=\sum_{y\in\mathcal{C}_j} \mathbb{P}_\model(y\mid \query_t)$.
Since the probabilities are non-negative, $(\sum_{y\in\mathcal{C}_j} \mathbb{P}_\model(y\mid \query_t))^2 = \sum_{y\in\mathcal{C}_j} \mathbb{P}_\model(y\mid \query_t)^2 + \sum_{y\in\mathcal{C}_j,k\neq l} \mathbb{P}_\model(y_k\mid \query_t)\mathbb{P}_\model(y_l\mid \query_t) \ge \sum_{y\in\mathcal{C}_j} \mathbb{P}_\model(y\mid \query_t)^2$.
Summing over all semantic modes $j$ yields $\sum_j w_j^2 \ge \sum_y P(y\mid \query_t)^2$, and hence
$H_2(Z) = 1-\sum_j w_j^2 \le 1-\sum_y P(y\mid \query_t)^2 = H_2(Y)$.
\end{proof}

Hence, even though $\quadent_\task$, which estimates $H_2(Y_\task)$, does not capture any notion of semantic distance among sampled responses, it still provides an upper bound on the dispersion of probability mass among semantic modes/clusters (if any). Small values of $\quadent_\task$ will thus provide some information regarding the number of semantic modes that exist in the MLLM's response distribution for a task instance $\task$. We can analyze this with two corollaries.

\begin{corollary}[Dominant-mode lower bound]
\label{cor:app:dominant}
Let $w_{\max}:=\max_j w_j$. Then
\[
w_{\max} \;\ge\; \sum_{j=1}^m w_j^2
\;=\;
1-H_2(Z)
\;\ge\;
1-H_2(Y)
\;\approx\;
1-Q.
\]
\end{corollary}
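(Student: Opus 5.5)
The chain of inequalities in \cref{cor:app:dominant} splits into three links, and I will verify each in turn.

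\emph{First link.} To show $w_{\max}\ge \sum_j w_j^2$, I use only that the $w_j$ form a probability vector: $w_j\ge 0$ and $\sum_j w_j=1$. Then
\[
\sum_j w_j^2 \le \sum_j w_j\, w_{\max} = w_{\max}.
\]
In words, the second moment $\sum_j w_j^2$ is the expectation of $w_Z$ under $Z\sim w$, and an average cannot exceed the maximum.

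\emph{Second link.} The equality $\sum_j w_j^2 = 1-H_2(Z)$ is just a rearrangement of the definition $H_2(Z)=1-\sum_j w_j^2$ given just before \cref{lem:app:coarsen}.

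\emph{Third link.} The inequality $1-H_2(Z)\ge 1-H_2(Y)$ is equivalent to $H_2(Z)\le H_2(Y)$, which is exactly \cref{lem:app:coarsen} applied to the mapping $Z=g(Y)$.

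\emph{Final relation.} The approximation $1-H_2(Y)\approx 1-Q$ is not a deterministic inequality. It follows from \cref{lem:app:QasH2}: $Q_t$ is an unbiased Monte Carlo estimator of $H_2(Y)$, so $\mathbb{E}[Q_t]=H_2(Y)$. I will state it in that sense. To make it quantitative, note that each term $1-p_i$ lies in $[0,1]$ and the samples are i.i.d., so Hoeffding's inequality gives
\[
|Q_t-H_2(Y)|\le \sqrt{\log(2/\delta)/(2k)}
\]
with probability at least $1-\delta$. Hence $w_{\max}\ge 1-Q_t-\sqrt{\log(2/\delta)/(2k)}$ with the same probability.

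The main subtlety is this last relation. It must be presented as an estimator-level statement, in expectation or with high probability, rather than as a deterministic bound. The three deterministic links are immediate.
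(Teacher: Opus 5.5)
Your proof is correct and follows the same route the paper relies on. The paper gives no separate proof of this corollary; it rests on $\sum_j w_j^2 \le w_{\max}\sum_j w_j$, the definition of $H_2(Z)$, \cref{lem:app:coarsen}, and the unbiasedness in \cref{lem:app:QasH2} behind ``$\approx$''. Your Hoeffding bound is a sound, optional sharpening of that last step, in the same spirit as the paper's McDiarmid argument in \cref{prop:app:concentration}; the one-sided version with $\log(1/\delta)$ in place of $\log(2/\delta)$ already suffices for the lower bound on $w_{\max}$.
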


Therefore, small $Q$ implies that most probability mass would be concentrated in a single semantic mode.

\begin{corollary}[Ruling out many high-mass modes]
\label{cor:app:manymodes}
Fix $\beta\in(0,1)$ and integer $r\ge 2$ with $r\beta\le 1$.
If at least $r$ modes satisfy $w_j\ge \beta$, then necessarily
\begin{equation}
\quadent \approx H_2(Y) \geq H_2(Z) \ge
1 - \Big(1-(r-1)\beta\Big)^2 - (r-1)\beta^2.
\label{eq:app:manymodes}
\end{equation}
\end{corollary}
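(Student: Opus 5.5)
The chain of inequalities splits into three links. The middle link $H_2(Y)\ge H_2(Z)$ is exactly \cref{lem:app:coarsen}. The approximation $\quadent\approx H_2(Y)$ is the unbiasedness statement of \cref{lem:app:QasH2} and needs nothing further. So the real content is the last inequality. Since $H_2(Z)=1-\sum_j w_j^2$, it suffices to show
\[
\sum_{j=1}^m w_j^2 \;\le\; \big(1-(r-1)\beta\big)^2+(r-1)\beta^2
\]
whenever at least $r$ modes have $w_j\ge\beta$. Here $w_j\ge 0$ and $\sum_j w_j=1$.

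The plan is to maximize the convex function $F(w)=\sum_j w_j^2$ over the feasible polytope. Relabel so that $w_1,\dots,w_r\ge\beta$. The feasible set is the simplex intersected with these $r$ half-spaces, which is compact and convex, so the maximum of the convex $F$ is attained at an extreme point. I would characterize the extreme points directly, or argue by mass transfer, as follows.

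\emph{Step 1: concentrate the free modes.} Moving mass from a smaller coordinate to a larger one never decreases $F$, since $(a+\delta)^2+(b-\delta)^2\ge a^2+b^2$ for $a\ge b\ge\delta\ge 0$. Hence all mass of the unconstrained modes $j>r$ can be moved onto one of the constrained modes, say the currently largest. This stays feasible and does not decrease $F$, so we may assume $m=r$.

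\emph{Step 2: push the constrained modes to the boundary.} Among $w_1,\dots,w_r$, take any two with $\beta<w_b\le w_a$. Move mass from $w_b$ to $w_a$ until $w_b=\beta$; this again does not decrease $F$. Iterating, at most one coordinate exceeds $\beta$. The maximizer is therefore $w=(1-(r-1)\beta,\beta,\dots,\beta)$, which is feasible because $r\beta\le1$ gives $1-(r-1)\beta\ge\beta$. Its value is $F=(1-(r-1)\beta)^2+(r-1)\beta^2$, and substituting into $H_2(Z)=1-F$ yields \eqref{eq:app:manymodes}.

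The main obstacle, though mild, is stating the mass-transfer monotonicity cleanly: it must preserve feasibility, which is why mass is only ever moved \emph{into} the already-largest constrained coordinate. It must also be clear that the process terminates, which holds because each move fixes one more coordinate at $\beta$ or zeroes one more free coordinate. One should also note that the "$\approx$" relating $\quadent$ to $H_2(Y)$ is only in expectation, as given by \cref{lem:app:QasH2}. The rigorous statement is the bound on $H_2(Y)$ itself, with $\quadent$ as its unbiased estimate.
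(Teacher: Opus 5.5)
Your proof is correct and follows the same route as the paper, which also reduces to maximizing $\sum_j w_j^2$ at the configuration $(1-(r-1)\beta,\beta,\dots,\beta,0,\dots,0)$ but only asserts that this configuration is optimal; your mass-transfer argument supplies the justification the paper omits. One small fix in Step 1: if a free mode is heavier than every constrained mode, your inequality (which assumes $a\ge b$) does not literally apply, but moving that mode's entire mass onto a constrained mode still does not decrease $F$, since $(a+b)^2\ge a^2+b^2$ for all $a,b\ge 0$.
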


\begin{proof}
To minimize $H_2(Z)=1-\sum_j w_j^2$, we can maximize $\sum_j w_j^2$. Under the constraint that $r$ modes have mass at least $\beta$, this can be done by setting $w_j=\beta$ for $r-1$ modes and concentrating the rest of the probability mass onto one additional mode:
$w_1=1-(r-1)\beta$ and $w_2=\cdots=w_r=\beta$ (with the others being $0$).
This yields $\sum_j w_j^2 = (1-(r-1)\beta)^2+(r-1)\beta^2$, implying~\eqref{eq:app:manymodes}.
\end{proof}

Hence, observing small $\quadent_\task$ precludes the existence of many semantic modes that each carry non-trivial probability mass, and indicate that most probability mass would likely be concentrated in a single semantic mode. Consequently, a regime where we observe large semantic volume $\unadjmethod_\task$ but small $\quadent_\task$ will be consistent with a \emph{dominant mode with a small number of outliers},
rather than multiple high-probability semantic modes.

\subsection{Semantic volume increases with between-mode spread (mixture intuition)}
\label{app:theory:mixture}

We now relate the volume term to the spread of semantic modes under a standard mixture-of-embeddings approximation.

\begin{assumption}[Mixture-of-modes embedding model (optional)]
\label{ass:app:mixture}
The embedding $\phi$ of a sampled response for fixed instance $t$ follows a finite mixture:
$\phi_\task \sim \sum_{j=1}^m w_j \mathcal{D}_j$ with mixture weights $w_j$, means $\mu_j$, and within-mode covariances $\Sigma_j$.
Let $\bar{\mu}=\sum_j w_j\mu_j$, define
\[
\Sigma_{\mathrm{within}} := \sum_j w_j \Sigma_j,
\qquad
\Sigma_{\mathrm{between}} := \sum_j w_j (\mu_j-\bar{\mu})(\mu_j-\bar{\mu})^\top,
\qquad
\Sigma_{\mathrm{mix}} := \Sigma_{\mathrm{within}}+\Sigma_{\mathrm{between}}.
\]
\end{assumption}

Because embeddings may live near a low-dimensional manifold, $\Sigma_{\mathrm{mix}}$ may be singular.
We therefore reason about the regularized determinant $\det(\Sigma_{\mathrm{mix}}+\epsilon I)$, consistent with
the jitter used in our \algname framework.

\begin{proposition}[Monotonicity under increased between-mode spread]
\label{prop:app:psdmono}
Fix $\epsilon>0$. If $\Sigma'_{\mathrm{between}} \succeq \Sigma_{\mathrm{between}}$ (PSD order) and
$\Sigma'_{\mathrm{mix}} := \Sigma_{\mathrm{within}}+\Sigma'_{\mathrm{between}}$, then
\begin{equation}
\det(\Sigma'_{\mathrm{mix}}+\epsilon I) \;\ge\; \det(\Sigma_{\mathrm{mix}}+\epsilon I).
\label{eq:app:detmono}
\end{equation}
Consequently, regimes that shift probability mass toward more separated semantic modes (larger between-mode spread)
lead to a larger population-level regularized determinant, which is the object approximated by the empirical volume term.
\end{proposition}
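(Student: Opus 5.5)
The argument rests on the monotonicity of the determinant on positive definite matrices with respect to the PSD (Loewner) order. Set $A := \Sigma_{\mathrm{mix}}+\epsilon I$ and $B := \Sigma'_{\mathrm{mix}}+\epsilon I$. Since $\Sigma_{\mathrm{within}}$ and $\Sigma_{\mathrm{between}}$ are weighted sums of covariance matrices and outer products, they are PSD. With $\epsilon>0$ this gives $A \succ 0$. Subtracting the two definitions, the within-mode term cancels:
\[
B - A = \Sigma'_{\mathrm{between}} - \Sigma_{\mathrm{between}} =: D \succeq 0 .
\]
So $B \succeq A \succ 0$, and the claim \eqref{eq:app:detmono} reduces to the standard fact that $\det$ is monotone on such pairs.

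To prove that fact, I would factor out $A$ via its symmetric square root. Since $A\succ0$, $A^{1/2}$ exists and is invertible, and
\[
\det B = \det\!\big(A^{1/2}(I + A^{-1/2} D A^{-1/2})A^{1/2}\big) = \det A \cdot \det\!\big(I + A^{-1/2} D A^{-1/2}\big).
\]
The matrix $M := A^{-1/2} D A^{-1/2}$ is a congruence of a PSD matrix, so it is PSD with eigenvalues $\lambda_i \ge 0$. Hence
\[
\det(I+M)=\prod_i(1+\lambda_i)\ge 1,
\]
which gives $\det B \ge \det A$.

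An equivalent route is concavity of $\log\det$: its derivative along $D$ is $\mathrm{tr}\big((A+sD)^{-1}D\big)\ge 0$ for $s\in[0,1]$. I would mention this only as a remark.

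For the "consequently" part, I would note that the empirical $S_k$ in Lemma~\ref{lem:app:detlemma} is an uncentered second moment. Under Assumption~\ref{ass:app:mixture} its population analogue is $\Sigma_{\mathrm{mix}} + \bar\mu\bar\mu^\top$. Holding $\bar\mu$ fixed, the same Loewner-order argument applies verbatim to $I_d + \tfrac{k}{\epsilon}(\cdot)$, so the population-level regularized log-determinant targeted by $\unadjmethod_\task$ is also nondecreasing. This part is interpretive rather than a formal limit statement, so I would keep it brief.

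The main obstacle is small. It consists of justifying that $A$ is strictly positive definite, so that $A^{-1/2}$ exists, and checking that $\Sigma_{\mathrm{within}}$ cancels exactly. The jitter $\epsilon$ handles the possibly singular $\Sigma_{\mathrm{mix}}$.
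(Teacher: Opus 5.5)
Your proof of the inequality is correct and uses the same reduction as the paper. $\Sigma_{\mathrm{within}}$ cancels, so $B=A+D$ with $A=\Sigma_{\mathrm{mix}}+\epsilon I\succ 0$ and $D=\Sigma'_{\mathrm{between}}-\Sigma_{\mathrm{between}}\succeq 0$. The only difference is how you show that $\det$ is monotone in the Loewner order. The paper cites Weyl's monotonicity theorem ($\lambda_i(A+D)\ge\lambda_i(A)$ for each ordered eigenvalue) and multiplies eigenvalues. You instead factor $\det B=\det A\cdot\det(I+A^{-1/2}DA^{-1/2})$. Your route avoids Courant--Fischer, and it makes explicit the positivity of the eigenvalues of $A$, which the paper's product step relies on without saying so. It also gives the exact ratio $\prod_i(1+\lambda_i(M))$, hence strict inequality iff $D\neq 0$. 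The paper's version is shorter once Weyl is taken as known.

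One caution on your ``consequently'' remark: you cannot hold $\bar\mu$ fixed. Because $\|\phi\|_2=1$, we have $\mathrm{tr}(\Sigma_{\mathrm{mix}}+\bar\mu\bar\mu^\top)=\mathbb{E}\|\phi\|_2^2=1$. With $\Sigma_{\mathrm{within}}$ fixed, $\|\bar\mu\|_2^2$ must therefore drop by $\mathrm{tr}(D)$. Fixing $\bar\mu$ forces $D=0$. More generally, the change in $\mathbb{E}[\phi\phi^\top]$ has zero trace, so it is PSD only when it vanishes, and the Loewner argument never gives a strict increase there.

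The transfer can genuinely fail. In $\mathbb{R}^2$, replace equal-weight point modes at $e_1,e_2$ by equal-weight point modes at $\pm u$ with $u=(e_1-e_2)/\sqrt{2}$. This doubles $\Sigma_{\mathrm{between}}$ (from $\tfrac12 uu^\top$ to $uu^\top$). Yet $\det(\mathbb{E}[\phi\phi^\top]+\epsilon I)$ falls from $(\tfrac12+\epsilon)^2$ to $\epsilon(1+\epsilon)$. The Gram determinant behind $\unadjmethod_\task$ also falls, from $\epsilon^{k-2}(k_1+\epsilon)(k_2+\epsilon)$ to $\epsilon^{k-1}(k+\epsilon)$, where $k_1+k_2=k$ are the sample counts at the two modes; the drop is $\epsilon^{k-2}k_1k_2$.

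You were right that $S_k$ is uncentered, and the paper's $S_k\approx\Sigma_{\mathrm{mix}}$ gloss after the proof has the same issue. Still, the link to the empirical volume term should stay heuristic rather than be presented as following verbatim.
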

\begin{proof}
Let $D := \Sigma'_{\mathrm{between}}-\Sigma_{\mathrm{between}}\succeq 0$ so
$\Sigma'_{\mathrm{mix}}+\epsilon I = (\Sigma_{\mathrm{mix}}+\epsilon I)+D$.
By Weyl's monotonicity, each eigenvalue of $(\Sigma_{\mathrm{mix}}+\epsilon I)+D$ is at least the corresponding eigenvalue
of $\Sigma_{\mathrm{mix}}+\epsilon I$. Since determinants are products of eigenvalues,~\eqref{eq:app:detmono} follows.
\end{proof}

Consequently, for large $k$ where sample covariance $S_k = \frac{1}{k}\Phi_\task^T\Phi_\task \approx \Sigma_{\mathrm{mix}}$, $\det(S_k+ \epsilon I)$, or the unadjusted semantic volume term in our proposed metric, increases with $\Sigma_{\mathrm{between}}$. When model uncertainty results in confabulations, where the model start to generate more spread out semantic modes in its responses, $\Sigma_{\mathrm{between}}$ increases, resulting in an increase in our \algname metric.

\subsection{Concentration and ranking consistency with finite $k$}
\label{app:theory:concentration}

Finally, we show that $\method_\task$ concentrates around its conditional expectation
(for a given task instance $\task$) as $k$ grows, and this helps with ranking consistency, supporting the strong empirical performance of \algname in meeting desiderata \ref{r:classify}. We show this via bounded differences and and relies only on (i) $\|\phi_i\|_2=1$, (ii) $p_i\in(0,1]$, and (iii) $\epsilon>0$ as we outlined in \cref{app:theory_setup}.

\begin{proposition}[Concentration and ranking consistency]
\label{prop:app:concentration}
Fix $\epsilon>0$ and $\alpha\ge 0$. Conditioned on a task instance $\task$,
let $(\phi_i,p_i)_{i=1}^k$ be i.i.d. as in Appendix~\ref{app:theory_setup}.
Define
\[
\method_\task
=\unadjmethod_\task + \alpha\quadent_\task,
\qquad
\quadent_\task=\frac{1}{k}\sum_{i=1}^k(1-p_i),
\qquad
\unadjmethod_\task=\frac{1}{2k}\log\det(\Phi_\task\Phi_\task^\top+\epsilon I_k).
\]
Then for any $\eta>0$,
\begin{equation}
\Pr\!\left(\big|\method_\task-\mathbb{E}[\method_\task]\big|>\eta\right)
\le
2\exp\!\left(-\frac{2k\eta^2}{L^2}\right),
\label{eq:app:mcdiarmid}
\end{equation}
where one may take
\[
L := \alpha + \tfrac{1}{2}\log\!\Big(\frac{1+\epsilon}{\epsilon}\Big)
= \alpha + \tfrac{1}{2}\log\!\Big(1+\frac{1}{\epsilon}\Big).
\]
Moreover, for two instances $\task_a,\task_b$ with a gap
$\mathbb{E}[\method_{\task_a}]-\mathbb{E}[\method_{\task_b}]\ge \Delta>0$,
\begin{equation}
\Pr\!\left(\method_{\task_a}\le \method_{\task_b}\right)
\le
4\exp\!\left(-\frac{k\Delta^2}{2L^2}\right).
\label{eq:app:ranking_mcdiarmid}
\end{equation}
\end{proposition}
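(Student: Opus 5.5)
The plan is to apply McDiarmid's bounded-differences inequality to $\method_\task$, viewed as a function of the $k$ i.i.d. pairs $(\phi_i,p_i)$. Concretely, I will show that replacing a single pair $(\phi_i,p_i)$ by an arbitrary admissible $(\phi_i',p_i')$, with $\|\phi_i'\|_2=1$ and $p_i'\in(0,1]$, changes $\method_\task$ by at most $L/k$. McDiarmid then gives $\Pr(|\method_\task-\mathbb{E}\method_\task|>\eta)\le 2\exp(-2\eta^2/\sum_i (L/k)^2)=2\exp(-2k\eta^2/L^2)$, which is \eqref{eq:app:mcdiarmid}. By \cref{prop:app:decomp} it suffices to bound the change in $\alpha\quadent_\task$ and in $\unadjmethod_\task$ separately and add the bounds.

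The incoherence term is immediate. Since $1-p_i\in[0,1)$, changing one $p_i$ moves $\alpha\quadent_\task$ by at most $\alpha/k$.

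For the volume term, write $G=\Phi_\task\Phi_\task^\top+\epsilon I_k$. The goal is to bound the change in $\log\det G$ by $\log(1+1/\epsilon)$, which after dividing by $2k$ contributes $\tfrac{1}{2k}\log(1+1/\epsilon)$. Order the rows so the changed row is last. Using the Schur complement,
\[
\log\det G=\log\det G_{-i}+\log\bigl(1+\epsilon-\phi_i^\top\Phi_{-i}^\top G_{-i}^{-1}\Phi_{-i}\phi_i\bigr),
\]
where $G_{-i}$ is the $(k-1)\times(k-1)$ Gram-plus-jitter matrix of the remaining rows and $\Phi_{-i}$ is the matrix with row $i$ removed. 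The first term does not depend on row $i$. For the second, the Schur complement $s_i$ is at least $\epsilon$, because $G-\epsilon I\succeq 0$ implies that the Schur complement of $G$ dominates the Schur complement of $\epsilon I$, which is $\epsilon$. It is also at most $1+\epsilon$, because $\Phi_{-i}^\top G_{-i}^{-1}\Phi_{-i}\succeq 0$ and $\|\phi_i\|=1$. Hence $\log s_i\in[\log\epsilon,\log(1+\epsilon)]$, and changing row $i$ moves $\log\det G$ by at most $\log((1+\epsilon)/\epsilon)$. This gives the total bounded-difference constant $(\alpha+\tfrac12\log(1+1/\epsilon))/k=L/k$.

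For the ranking statement \eqref{eq:app:ranking_mcdiarmid}, note that if $\method_{\task_a}\le\method_{\task_b}$ while $\mathbb{E}\method_{\task_a}-\mathbb{E}\method_{\task_b}\ge\Delta$, then either $\method_{\task_a}\le\mathbb{E}\method_{\task_a}-\Delta/2$ or $\method_{\task_b}\ge\mathbb{E}\method_{\task_b}+\Delta/2$. A union bound with $\eta=\Delta/2$ in \eqref{eq:app:mcdiarmid} gives $2\cdot 2\exp(-2k(\Delta/2)^2/L^2)=4\exp(-k\Delta^2/(2L^2))$. This holds regardless of whether the samples for the two instances are independent of each other, since only a union bound is used.

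The main obstacle is the Schur-complement step, specifically the lower bound $s_i\ge\epsilon$. It needs the monotonicity of Schur complements under the Loewner order, or equivalently the observation that $s_i$ is $\epsilon$ plus the squared distance from $\phi_i$ to the span of the remaining rows under the regularized inner product. Everything else is routine bookkeeping.
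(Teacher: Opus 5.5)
Your proposal is correct and matches the paper's proof: McDiarmid with per-sample constant $L/k$ (namely $\alpha/k$ from the incoherence term plus $\tfrac{1}{2k}\log((1+\epsilon)/\epsilon)$ from the volume term), the Schur-complement factorization with $s\in[\epsilon,1+\epsilon]$, and a union bound at $\eta=\Delta/2$ for the ranking claim. The only cosmetic difference is how you get $s\ge\epsilon$: you invoke Loewner monotonicity of Schur complements, while the paper argues directly that $A\succeq\epsilon I_k$ gives $(A^{-1})_{11}=1/s\le 1/\epsilon$, which is the standard proof of that monotonicity.
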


\begin{proof}
We first show bounded differences for the incoherence term $\quadent_\task$ and unadjusted semantic volume terms $\unadjmethod_\task$ respectively.

For the incoherence term $\quadent_\task$, if we replace one sample $(\phi_i,p_i)$ by $(\phi_i',p_i')$, then
\[
\left|\quadent_\task-\quadent_\task'\right|
=
\left|\frac{1}{k}\big((1-p_i)-(1-p_i')\big)\right|
\le \frac{1}{k},
\]
since $1-p\in[0,1]$. Hence
\[
\left|\alpha\quadent_\task-\alpha\quadent_\task'\right|\le \frac{\alpha}{k}.
\]

For the unadjusted semantic volume term $\unadjmethod_\task$, let $A := \Phi_\task\Phi_\task^\top+\epsilon I_k$ and $A':=\Phi_\task'\Phi_\task'^\top+\epsilon I_k$,
where $\Phi_\task'$ differs from $\Phi_\task$ only in row $i$.
Permute indices so that $i=1$ (determinants are permutation-invariant) and write
\[
A=\begin{pmatrix}
a_{11} & a_{1,-1}^\top\\
a_{1,-1} & A_{-1,-1}
\end{pmatrix},
\qquad
A'=\begin{pmatrix}
a_{11}' & a_{1,-1}'^\top\\
a_{1,-1}' & A_{-1,-1}
\end{pmatrix}.
\]
Because only row/column $1$ changes, the principal submatrix $A_{-1,-1}$ is identical
in $A$ and $A'$. By the Schur complement determinant formula,
\[
\det(A)=\det(A_{-1,-1})\cdot s,
\qquad
\det(A')=\det(A_{-1,-1})\cdot s',
\]
where
\[
s := a_{11}-a_{1,-1}^\top A_{-1,-1}^{-1}a_{1,-1},
\qquad
s' := a_{11}'-a_{1,-1}'^\top A_{-1,-1}^{-1}a_{1,-1}'.
\]
Therefore,
\[
\log\det(A')-\log\det(A)=\log s' - \log s.
\]
Since $\|\phi_1\|_2=\|\phi_1'\|_2=1$, we have $a_{11}=a_{11}'=1+\epsilon$. Furthermore, since $A$ is positive definite for $\epsilon > 0$, its principal submatrix $A_{-1,-1}$ and its corresponding inverse are also positive definite. Hence the
subtracted quadratic forms are nonnegative, implying that $s,s'\le 1+\epsilon$.
Also, $A\succeq \epsilon I_k$ implies $A^{-1}\preceq \epsilon^{-1}I_k$,
hence $(A^{-1})_{11}\le 1/\epsilon$. But the block inverse identity gives
$(A^{-1})_{11}=1/s$, so $s\ge \epsilon$; similarly $s'\ge \epsilon$.
Thus $s,s'\in[\epsilon,1+\epsilon]$, and
\[
\big|\log\det(A')-\log\det(A)\big|
=
|\log s' - \log s|
\le
\log\!\Big(\frac{1+\epsilon}{\epsilon}\Big).
\]
Dividing by $2k$ yields
\[
\big|\unadjmethod_\task-\unadjmethod_\task'\big|
\le
\frac{1}{2k}\log\!\Big(\frac{1+\epsilon}{\epsilon}\Big).
\]

Finally, combining the bounded difference for $\quadent_\task$ and $\unadjmethod_\task$ gives an overall bounded difference constant
\[
\big|\method_\task-\method_\task'\big|
\le
\frac{1}{k}\left(\alpha+\tfrac{1}{2}\log\!\Big(\frac{1+\epsilon}{\epsilon}\Big)\right)
= \frac{L}{k}.
\]
McDiarmid's inequality then yields \cref{eq:app:mcdiarmid}.

To show ranking consistency, note that if $\mathbb{E}[\method_{\task_a}]-\mathbb{E}[\method_{\task_b}]\ge \Delta$
and $\method_{\task_a}\le \method_{\task_b}$ occurs, then either
$\method_{\task_a}-\mathbb{E}\method_{\task_a}\le -\Delta/2$ or
$\method_{\task_b}-\mathbb{E}\method_{\task_b}\ge \Delta/2$.
Applying \cref{eq:app:mcdiarmid} to each instance and union bounding gives
\cref{eq:app:ranking_mcdiarmid}.
\end{proof}

Note that If $(E[V_t\mid \query_t\in W] > E[V_t\mid \query_t\in C])$ and both concentrate with $k$, then ranking errors decay exponentially in $k$, providing support for the strong empirical performance of \algname in meeting desiderata \ref{r:classify}.

\section{Discussion}
\label{app:insight}

\subsection{Efficiency analysis}
\label{app:efficiency}

We present the algorithm for \algname in \cref{alg:umpire}.

To compare the efficiency (\Cref{r:efficiency}) of \algname and the baselines, we analyze the overall computational running time that the methods incur on top of the MLLM inference costs due to sampling responses (since all baseline methods involve the same response sampling process). Experiments are conducted on a single L40 involving the processing of 3000 VQAv2 task instances. In \Cref{fig:run_time_num_gen}(a), we plot the methods' running time overheads of uncertainty metrics against their AUROC performance.  We observe that methods relying on external semantic evaluation, such as \se, \cu, incur a prohibitive computational cost (high overhead) due to the heavy usage of NLI models or clustering algorithms. In contrast, \algname (marked by the red star) operates with negligible overhead, comparable to simple entropy-based baselines like \lne, while achieving state-of-the-art accuracy. Moreover, by leveraging the \texttt{fast\_logdet} library, \algname computes only the log-determinant of the matrix rather than its full eigenvalue decomposition (\eigen), further reducing computational overhead. This places \algname on the optimal Pareto frontier, offering a high-performance solution that is computationally lightweight enough for real-time applications.

Furthermore, we analyze the sensitivity of our method to the number of sampled responses $k$ in \Cref{fig:run_time_num_gen}(b) (detail of setup in \Cref{app:num_gen}). While increasing $k$ generally improves the performance of uncertainty metrics, generating a large number of responses is resource-intensive. \algname demonstrates superior sample efficiency, consistently outperforming all baselines across the entire range of $k$. Notably, our method achieves high performance even with a small number of generations (e.g., $k=5$), significantly reducing the total inference cost compared to methods that require larger sample sizes to converge. In \Cref{app:singlesample}, we further compare \algname ($k=5$) with some single-sampling methods, such as Sequence Probability, Perlexity, and found that with small $k$, \algname can achieve a significant gap compared to the second-best method.
\begin{figure}
    \centering
    \includegraphics[width=\linewidth]{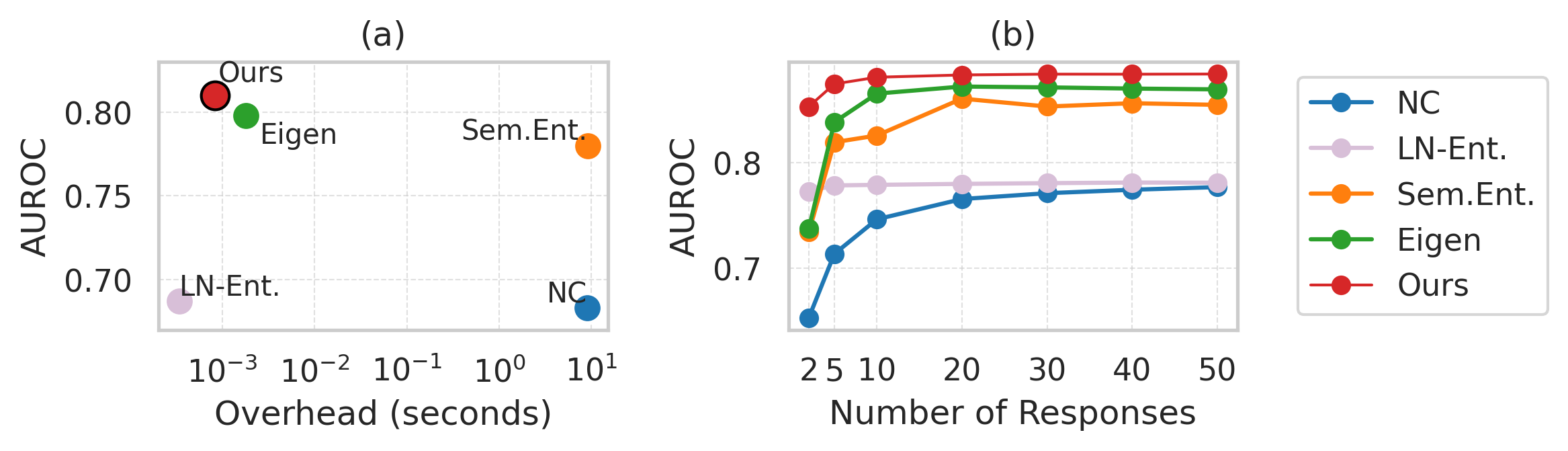}
    \caption{Efficiency analysis of \algname compared to baselines. \textbf{(a)} Computational overhead (inference latency in seconds) versus uncertainty estimation performance (AUROC). \algname (red star) achieves state-of-the-art performance with negligible overhead, avoiding the high computational cost associated with semantic equivalence checks in methods like \se. \textbf{(b)} The effect of the number of generated responses $k$ on performance. \algname consistently outperforms other methods across all sample sizes and converges to high accuracy even with few generations (e.g., $k=5$), demonstrating superior sample efficiency.}
    \label{fig:run_time_num_gen}
\end{figure}

\subsection{Assessing multimodal query input coherence (\ref{r:coherence})}
\label{app:coherence_exp}

For a metric to satisfy \ref{r:coherence}, it should consider the coherence of each sampled response with respect to the multimodal task instance query, rather than just a single modality (e.g., text). We design an experimental setting on image-text modalities to assess this by computing uncertainty metrics based on (1) both image and text portions of the query $\query_\task = (\img_\task, \query_{\text{text}})$, (2) image that has additive noise $\mathcal{N}(0, 0.5^2)$, $\query_{\task, \text{noise}} = (\img_\text{noise}, \query_{\text{text}})$,  (3) image that is entirely black, $\query_{\task, \text{black}} = (\img_{\text{black}}, \query_{\text{text}})$, and (4) no image, i.e., only the text portion of the query, $\query_{\task, \text{text}} = \query_{\text{text}})$ (image removed). A metric that satisfies \ref{r:coherence} should perform significantly better under (1), while a metric that does not will produce similar performance regardless of (1)-(4).

Specifically, for (2)-(4), after the MLLM has generated responses $y_\task$ based on $\query_\task$, we recompute the various metrics \lne, \se, \eigen, and \algname based on the query-answer pair lower quality to no image, e.g., based on recomputing the response logits and embedding vectors of text-only query-answer pairs $[\query_{\text{text}},y_\task]$, on the first 3000 samples of the VQAv2 validation set. 
In \Cref{fig:r4_r5}, we observe that \lne and \algname, and to a smaller extent \se, are sensitive to the lack of multi-modality information, with their performance increasing once the image queries are provided during the computation of the metrics. On the other hand, \eigen is insensitive to whether the image query is provided or not. This may be because \eigen measures only the diversity of responses through the covariance matrix of text response sentence embeddings across multiple generations, which is not affected by the image query bias. On the contrary, logit signals are more sensitive to the coherence of the multimodal input query and the generated response, hence metrics that use some form of that such as \lne and \algname can better satisfy \ref{r:coherence}.

\begin{algorithm}[tb]
\label{alg:umpire}
   \caption{UMPIRE for task instance $t$}
\begin{algorithmic}
   \STATE {\bfseries Input:} Model $\mathcal{M}$, Query $q_t$, Num. Samples $k$, Jitter $\epsilon$, Scaling $\alpha$
   \STATE {\bfseries Output:} Uncertainty Score $V_t$
   \STATE Initialize $\Phi \in \mathbb{R}^{k \times d}$, $\mathbf{p} \in \mathbb{R}^k$
   \FOR{$i=1$ {\bfseries to} $k$}
       \STATE Sample response $y_i \sim \mathcal{M}(\cdot | q_t)$ with Temp. $T=1$
       \STATE Extract embedding $\phi_i \leftarrow \text{Normalize}(\text{LastLayer}(y_i))$
       \STATE Compute probability $p_i \leftarrow P_{\mathcal{M}}(y_i|q_t)$
   \ENDFOR
   \STATE Compute Gram matrix $G \leftarrow \Phi \Phi^\top$ from $\{\phi_i\}$
   \STATE $U_t \leftarrow \frac{1}{2k} \log\det(G + \epsilon I_k)$ \COMMENT{Semantic volume}
   \STATE $Q_t \leftarrow \frac{1}{k} \sum_{i=1}^k (1 - p_i)$ \COMMENT{Incoherence score}
   \STATE $V_t \leftarrow U_t + \alpha Q_t$ \COMMENT{\cref{eq:casev}}
   \STATE \textbf{return} $V_t$
\end{algorithmic}
\end{algorithm}

\subsection{Comparisons with \eigen}
\label{app:eigen}

As mentioned in \cref{app:benchmark}, the \eigen \citep{chenLLMsInternalStates2024} metric involves computing the log determinant of the covariance matrix of sampled sentence embeddings. At first glance, this metric may seem similar to that of \algname. However, there are key differences that lead to \eigen consistently underperforming our proposed \algname metric, as can be seen in our empirical results in \cref{sec:exp},
and \eigen could be interpreted as a special case of \algname.

A major distinction, among others, is that \citet{chenLLMsInternalStates2024} analyzed only the LLM setting, and proposed \eigen by considering the differential entropy of sentence embeddings, assuming that the embeddings form a multivariate Gaussian distribution -- this motivated the log determinant operation in the metric, which \algname also contains. Note that following their theoretical motivations, the log determinant term of \eigen will be on the sample covariance (of size $d$ by $d$), where the feature space dimensionality of each sample (LLM embeddings) is typically very large.  However, our \algname framework considers the more general MLLM setting, and adopts a different approach inspired by the quality-diversity kernel decomposition of determinantal point processes (DPP), which naturally factors the incoherence scores when computing the \algname metric to adjust the semantic volume enclosed by the responses' semantic embeddings, computed by the log determinant of the Gram matrix (of size $k$ by $k$). This inclusion of the incoherence scores, and the resulting quadratic entropy term $\quadent$, help (1) satisfy \ref{r:coherence}, as we can see in \cref{app:coherence_exp} that \eigen does not, and (2) significantly improve metric performance (\cref{app:alphasensitivity}). 

\subsection{Additional discussion on the effectiveness desiderata}
\label{app:effectdesiderata}

In this section, we provide further discussion on the various effectiveness desiderata, such as the differences and relevance of \ref{r:classify}, \textbf{\ref{r:r2a}} and \textbf{\ref{r:r2b}}. For ease of discussion, we focus on comparing \ref{r:classify} and \textbf{\ref{r:r2b}}, which is a stricter form of \textbf{\ref{r:r2a}}.

The discrimination desiderata \ref{r:classify} and the calibration desiderata \textbf{\ref{r:r2b}} are primarily motivated by different considerations. In the former, we are concerned about classifying whether a task instance $t$ will be answered correctly or not by the MLLM. As represented in \cref{eq:distinguish}, for this desiderata, the metric should be able to successfully rank the task instances that the MLLM will get wrong higher than those that it will get correct, which can be evaluated by the AUROC of the metric. Such evaluations are used in many LLM uncertainty quantification works \citep{farquharDetectingHallucinationsLarge2024,malinin2021uncertainty,chenLLMsInternalStates2024,xiong2024llmsexpressuncertaintyempirical} to assess the performance of their metrics. While useful, note that the desiderata does not consider a quantitative, continuous measure of the uncertainty associated with each task response, since discrimination of correct/wrong responses is a binary task.  

However, in the latter, we are concerned about providing an accurate, calibrated estimate of whether the MLLM will get a task instance wrong, conditional on the uncertainty metric (as in \cref{eq:r2b_calibration_clean}), which can be evaluated via the expected calibration error (ECE). Note that in this scenario, we are not concerned about classifying whether a task instance will be answered correctly (\ref{r:classify}), but instead are focused on being accurate about the \emph{probability} that a task instance will be answered wrongly given an associated metric value. 

To illustrate the difference, consider an extreme example where an MLLM will definitely get $50\%$ of the task instances correct, and the rest wrong. The vacuous metric that assigns the same uncertainty score to all task instances might satisfy \textbf{\ref{r:r2b}} since it will output the average error rate, 0.5, as the score for all task instances. This metric would violate \ref{r:classify} and fail to classify the correct from the wrong task instances. Instead, a better metric might strive to assign $1$ to all task instances that can be answered correctly and $0$ to the rest, satisfying both \ref{r:classify} and \textbf{\ref{r:r2b}}.

In practice, we would likely not have perfect information prior to evaluation on whether a task instance will be correct or wrong. That is why for \ref{r:classify} the goal is only for the metric's AUROC to get as close to 1 as possible, as the best possible AUROC would depend on the model and task. However, given two metrics that can achieve the same AUROC, a poor metric might only obtain the right relative ranking of task instances, while a good metric would not only achieve the same AUROC but also provide calibrated probabilities on how likely a task instance would be answered correctly or not. Hence, both the \ref{r:classify} and \textbf{\ref{r:r2b}} should be considered when evaluating uncertainty metrics, as we described in \cref{sec:desiderata}. In the absence of a small development set of unlabeled task instances before deployment, metrics satisfying \textbf{\ref{r:r2a}} would at least provide interpretable relative information regarding how likely a task instance would be answered correctly compared to another. 

\subsection{Role of $\unadjmethod$ and $\quadent$ in \algname~: Hyperparameter $\alpha$ sensitivity}
\label{app:alphasensitivity}

As mentioned in \textbf{U3}, \Cref{sec:method} and in the practical consideration subsection in \Cref{sec:prac_consi}, \algname has a hyperparameter $\alpha$ that controls the balance between two terms in \Cref{eq:casev}: the unadjusted semantic volume $\unadjmethod$ and the quadratic entropy term $\quadent$.

\paragraph{Role of $\unadjmethod$ and $\quadent$.} By varying this hyperparameter, we can observe the contribution of these two terms to the final performance of \algname. When $\alpha = 0$, \algname is equivalent to $\unadjmethod$, and increasing $\alpha$ from 0 increases the contribution of $\quadent$. 
As discussed in \cref{sec:prac_consi}, $\quadent$ and $\unadjmethod$ play complementary roles. On its own, $\quadent$ is a fairly strong discriminator (\ref{r:classify}, achieving an average AUROC of 81.2), but combining $\unadjmethod$ with it to form the overall \algname metric $\method$ results in significantly better risk-score quality (\ref{r:risk}) while maintaining competitive discrimination capabilities (\ref{r:classify}). 
We see this in \cref{tab:adaptive_best} -- while \algname maintains a competitive AUROC comparable to the strong $Q$ baseline, it achieves a dramatic improvement in reliability metrics: the CPC rises significantly from 84.4 ($Q$) to 91.7 ($\method_{\text{best}}$), and the ECE is reduced by over 30\% (from 0.071 to 0.048). 

To validate this synergy further, we performed a Likelihood Ratio Test (LRT) to assess whether the unadjusted semantic volume ($\unadjmethod$) contributes distinct information not captured by incoherence ($\quadent$). We evaluated a restricted model ($H_0: \acc(\model,\task) = \sigma(\beta_0 + \beta_1 Q)$) relying solely on incoherence against a full model ($H_1: \acc(\model,\task) = \sigma(\beta_0 + \beta_1 Q + \beta_2 U$)) that incorporates both terms, computing the test statistic $\Lambda = -2(\ell_{\text{restricted}} - \ell_{\text{full}})$ to assess whether the addition of $U$ yields a statistically significant improvement in model fit. As shown in \Cref{tab:lrt_results}, the inclusion of $\unadjmethod$ yields a statistically significant improvement in model fit across all VQA datasets ($p < 0.001$).

\begin{table}[ht]
\caption{Likelihood Ratio Test (LRT) Results. Comparison of model fit between using Incoherence ($Q$) alone versus the Full Model ($Q+U$). Significance (yes/no) denotes $p < 0.001$.}
\label{tab:lrt_results}
\centering
\scriptsize
\resizebox{0.6\linewidth}{!}{
\begin{tabular}{lrrr}
\toprule
\textbf{Dataset} & \textbf{LR Statistic ($\chi^2$)} & \textbf{\textit{p}-value} & \textbf{Significance?} \\
\midrule
VQAv2 & 165.30 & $<0.0001$ & yes \\
OKVQA & 41.49 & $<0.0001$ & yes \\
AdVQA & 526.33 & $<0.0001$ & yes \\
VQA-RAD & 30.57 & $<0.0001$ & yes \\
MathVista & 32.22 & $<0.0001$ & yes \\
\bottomrule
\end{tabular}
}
\end{table}

To better visualize the interplay betweeen $\quadent$ and $\unadjmethod$, we can vary $\alpha$ and examine the change in \algname's performance in AUROC, ECE, and CPC. As can be seen in \Cref{fig:alpha_tunning}, we see that the performance for AUROC is relatively stable across $\algname$ once we have included a sizeable mix of $\quadent$. This is due to AUROC's nature as a rank-based metric invariant to strictly monotonic transformations, and adjusting $\alpha$ still broadly preserves the relative ordering of uncertainty scores between correct and incorrect instances. In contrast, CPC and ECE are distribution-sensitive and they exhibit higher sensitivity to $\alpha$ as they rely on the precise calibration and linearity of the metric magnitudes In this case, we observe `U' shape curves indicative of both terms playing an important role in good performance, as we have seen via \cref{tab:adaptive_best} and \cref{tab:lrt_results}.

\begin{figure}[ht]
    \centering
    \includegraphics[width=\linewidth]{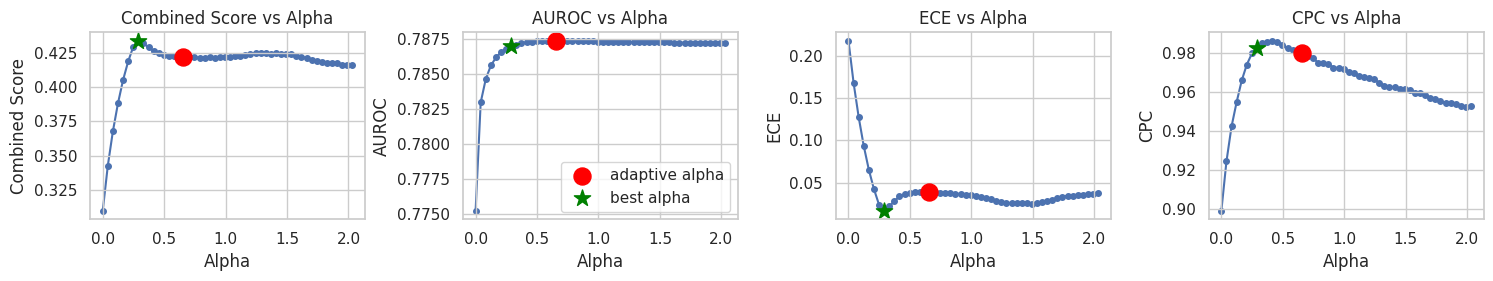}
    \caption{
    Sensitivity analysis of the hyperparameter $\alpha$ in \algname on the AdVQA dataset. The plots illustrate the effect of the incoherence score weight on Combined Score, AUROC, ECE, and CPC. The value $\alpha=0$ corresponds to the unadjusted semantic volume. The green star (\textcolor{green}{$\star$}) marks the optimal $\alpha$ found by maximizing the Combined Score, while the red dot (\textcolor{red}{$\bullet$}) indicates the value selected by our adaptive strategy. Notably, the adaptive approach yields performance close to the optimum without requiring labeled data for tuning.}
    \label{fig:alpha_tunning}
\end{figure}

\begin{table*}[ht]
\caption{
We compare the unadjusted semantic volume \textbf{$\unadjmethod$}, Monte Carlo estimate of the expected incoherence value or quadratic entropy \textbf{$\quadent$}, and our final form of \algname \textbf{$\method$}. We report two values for $\method$: $\method_{adap}$ which uses $\alpha$ based on the adaptive $\alpha$ heuristic that does not require labeled data, and $\method_{best}$ which uses $\alpha$ found based on a held-out labeled development set.
}
\label{tab:adaptive_best}
\centering\scriptsize
\setlength{\tabcolsep}{1.1pt}
\resizebox{\linewidth}{!}{%
\begin{tabular}{l|cccc|cccc|cccc}
\toprule
Dataset & \multicolumn{4}{c|}{AUROC $\uparrow$}& \multicolumn{4}{c|}{CPC $\uparrow$}& \multicolumn{4}{c}{ECE $\downarrow$} \\
\cmidrule(lr){2-5} \cmidrule(lr){6-9} \cmidrule(lr){10-13}
 & $\unadjmethod$ & $\quadent$ & $\method_{\text{adap.}}$ & $\method_{\text{best}}$ & $\unadjmethod$ & $\quadent$ & $\method_{\text{adap.}}$ & $\method_{\text{best}}$ & $\unadjmethod$ & $\quadent$ & $\method_{\text{adap.}}$ & $\method_{\text{best}}$\\
\midrule\midrule
\multicolumn{13}{l}{\textbf{Image-Text}} \\ VQAv2 & 86.7\textsubscript{±0.2} & \textbf{88.2\textsubscript{±0.2}} & \underline{88.1\textsubscript{±0.2}} & 88.0\textsubscript{±0.1} & \underline{95.4\textsubscript{±0.6}} & 88.0\textsubscript{±0.5} & 94.6\textsubscript{±0.1} & \textbf{97.5\textsubscript{±1.5}} & .051\textsubscript{±.007} & .070\textsubscript{±.005} & \underline{.038\textsubscript{±.004}} & \textbf{.028\textsubscript{±.011}} \\ OKVQA & 73.7\textsubscript{±0.1} & \textbf{75.6\textsubscript{±0.1}} & \underline{75.3\textsubscript{±0.1}} & 75.2\textsubscript{±0.1} & 85.1\textsubscript{±8.6} & 92.4\textsubscript{±0.4} & \textbf{96.7\textsubscript{±0.7}} & \underline{93.5\textsubscript{±6.5}} & .163\textsubscript{±.011} & .127\textsubscript{±.002} & \underline{.045\textsubscript{±.007}} & \textbf{.041\textsubscript{±.002}} \\ AdVQA & \underline{77.5\textsubscript{±0.3}} & \textbf{78.7\textsubscript{±0.2}} & \textbf{78.7\textsubscript{±0.2}} & \textbf{78.7\textsubscript{±0.3}} & 89.9\textsubscript{±1.6} & 92.4\textsubscript{±0.4} & \textbf{98.0\textsubscript{±0.1}} & \underline{97.7\textsubscript{±0.7}} & .218\textsubscript{±.004} & .083\textsubscript{±.003} & \underline{.039\textsubscript{±.003}} & \textbf{.034\textsubscript{±.005}} \\ MathVista & 82.1\textsubscript{±0.7} & \textbf{83.1\textsubscript{±0.6}} & 82.6\textsubscript{±0.6} & \underline{82.8\textsubscript{±0.5}} & 71.4\textsubscript{±1.3} & 79.1\textsubscript{±5.2} & \underline{83.9\textsubscript{±2.7}} & \textbf{85.7\textsubscript{±4.9}} & .308\textsubscript{±.013} & \textbf{.029\textsubscript{±.006}} & .089\textsubscript{±.015} & \underline{.083\textsubscript{±.035}} \\ VQA-RAD & 79.6\textsubscript{±0.6} & 80.4\textsubscript{±0.2} & \textbf{80.7\textsubscript{±0.3}} & \underline{80.6\textsubscript{±0.3}} & 61.2\textsubscript{±0.6} & \underline{84.4\textsubscript{±6.3}} & 80.8\textsubscript{±7.1} & \textbf{88.4\textsubscript{±4.8}} & .352\textsubscript{±.020} & \textbf{.045\textsubscript{±.013}} & .072\textsubscript{±.009} & \underline{.061\textsubscript{±.005}} \\ \midrule \rowcolor[gray]{0.9}\textbf{Avg (image)} & 79.9\textsubscript{±0.4} & \textbf{81.2\textsubscript{±0.3}} & \underline{81.1\textsubscript{±0.3}} & 81.0\textsubscript{±0.2} & 80.6\textsubscript{±2.5} & 87.3\textsubscript{±2.6} & \underline{90.8\textsubscript{±2.1}} & \textbf{92.6\textsubscript{±3.7}} & .218\textsubscript{±.011} & .071\textsubscript{±.006} & \underline{.057\textsubscript{±.008}} & \textbf{.050\textsubscript{±.012}} \\ \midrule \multicolumn{13}{l}{\textbf{Audio}} \\ SLUE & 78.9\textsubscript{±0.4} & \textbf{83.0\textsubscript{±0.3}} & 82.2\textsubscript{±0.3} & \underline{82.5\textsubscript{±0.4}} & 74.4\textsubscript{±3.1} & 89.2\textsubscript{±1.1} & \textbf{93.6\textsubscript{±0.5}} & \underline{92.4\textsubscript{±1.6}} & .185\textsubscript{±.001} & .066\textsubscript{±.003} & \underline{.054\textsubscript{±.009}} & \textbf{.050\textsubscript{±.007}} \\ SpokenSQ. & 77.7\textsubscript{±0.1} & \textbf{80.2\textsubscript{±0.1}} & \underline{79.7\textsubscript{±0.1}} & \underline{79.7\textsubscript{±0.2}} & 91.7\textsubscript{±0.7} & 91.8\textsubscript{±1.4} & \textbf{97.8\textsubscript{±0.5}} & \underline{97.7\textsubscript{±0.5}} & .239\textsubscript{±.005} & \underline{.079\textsubscript{±.001}} & \textbf{.031\textsubscript{±.002}} & \textbf{.031\textsubscript{±.002}} \\ \addlinespace \multicolumn{13}{l}{\textbf{Video}} \\ VidMME & 81.2\textsubscript{±0.9} & \underline{81.5\textsubscript{±0.1}} & \textbf{81.9\textsubscript{±0.4}} & \textbf{81.9\textsubscript{±0.2}} & 75.7\textsubscript{±3.5} & 57.7\textsubscript{±7.6} & \underline{77.8\textsubscript{±4.3}} & \textbf{80.3\textsubscript{±6.4}} & .328\textsubscript{±.011} & \underline{.071\textsubscript{±.003}} & .126\textsubscript{±.026} & \textbf{.055\textsubscript{±.010}} \\ \addlinespace \midrule \rowcolor[gray]{0.9}\textbf{Avg (all)} & 79.7\textsubscript{±0.4} & \textbf{81.3\textsubscript{±0.2}} & \underline{81.2\textsubscript{±0.3}} & \underline{81.2\textsubscript{±0.3}} & 80.6\textsubscript{±2.5} & 84.4\textsubscript{±2.9} & \underline{90.4\textsubscript{±2.0}} & \textbf{91.7\textsubscript{±3.4}} & .230\textsubscript{±.009} & .071\textsubscript{±.005} & \underline{.062\textsubscript{±.009}} & \textbf{.048\textsubscript{±.010}} \\
\bottomrule
\end{tabular}
}

\end{table*}

\paragraph{Choosing hyperparameter  $\alpha$.} Unless stated otherwise, in our experiments, we did not tune the hyperparameter based on a labeled development set but instead set $\alpha$ such that both terms have the same expected contribution (i.e., `adaptive' $\alpha$ set to be the ratio of the median of $\quadent$ and $\unadjmethod$ based on an unlabeled sample of task instances). We also show that \algname performance is also robust to the size of this unlabeled set of task instances that we use to set by conducting an experiment of using different subset sizes (1\%, 5\%, and 10\%) of the unlabeled evaluation set to set $\alpha$. As can be seen in the table \Cref{tab:adaptive_alpha_subset}, \algname maintains consistent and high performance across all metrics, with minimal variation in AUROC, ECE, CPC, and AURAC, underscoring its robustness to the subset of unlabeled data used.

However, in practice, users could potentially search for a better hyperparameter value for their task, such as via grid search or AutoML methods like Bayesian Optimization on a small labeled development set. To determine the optimal operating point during such a search, we can utilize a Combined Score formulated as: $\text{Combined Score} = w_1 \cdot \text{AUROC} + w_2 \cdot \text{CPC} - w_3 \cdot \text{ECE}$. In \Cref{fig:alpha_tunning}, we provide an illustration of these performance metrics versus $\alpha$ values calculated on the full AdVQA test set. While \Cref{tab:adaptive_best} reports the specific performance gains achieved by tuning $\alpha$ on a 10\% labeled development set, the results in this table and \Cref{fig:alpha_tunning} both demonstrates that the `adaptive alpha' approach (red dot), which balances both terms in \Cref{eq:casev} without supervision, is not very far off from the ideal optimum (marked by the green star).

\begin{table}[h]
\caption{\algname performance is robust to the size of unlabeled set of task instance that we use to compute adaptive $\alpha$. Results are computed on VQAv2 validation set.}
\label{tab:adaptive_alpha_subset}
\centering \scriptsize
\resizebox{0.5\columnwidth}{!}{%
\begin{tabular}{lllll}
\toprule
Subset Size & AUROC & ECE   & CPC   & AURAC \\
\midrule
\midrule
1\%          & 0.882 & 0.04  & 0.948 & 0.966 \\
5\%          & 0.882 & 0.038 & 0.947 & 0.966 \\
10\%         & 0.882 & 0.038 & 0.946 & 0.966 \\
\bottomrule
\end{tabular}
}
\end{table}

\section{Experimental settings and other results}
\label{app:exp_setup_result}
\subsection{Benchmarks}
\label{app:benchmark}
\subsubsection{Datasets}
\label{app:dataset}
For our experiments, we utilize a diverse set of general multi-modality question-answering baseline datasets to ensure a comprehensive evaluation across different scenarios. Specifically, for image-text understanding,  we use VQAv2 \citep{balanced_vqa_v2}, OKVQA \citep{okvqa}, and AdVQA \citep{li2021adversarial}, which include challenging cases such as out-of-distribution and adversarial settings. 
Besides, we also try to use the domain-specific visual QA datasets, including VQA-RAD, a dataset of question-answer pairs on radiology images, and MathVista, a consolidated Mathematical reasoning baseline within visual contexts. 
We evaluate our method using the first 15,000 samples from the validation split of VQAv2, along with the full validation sets of OKVQA (5,000 samples) and AdVQA (10,000 samples), the test split of VQA-RAD \citep{vqarad} (450 samples), and MathVista \citep{mathvista} (testmini split - 1,000 samples). 

These datasets provide a robust test bed for assessing the effectiveness of our approach across different types of visual QA tasks. Besides image-text understanding, we also show the effectiveness of various uncertainty metrics on audio-text and video-text understanding via audio QA datasets, including the test split of SLUE-P2-SQA5 \citep{shon2022slue}, Spoken SQuAD \citep{li2018spoken}, and video QA datasets with Video-MME-short  \citep{fu2025video} (we convert multi-choice answers into free-form answers by taking the correct choice).

\subsubsection{Baselines} The details of each baseline are as follows.
\begin{itemize}
    \item  \textbf{Neighborhood Consistency (\cu) \citep{khan2024consistency}.} This method tries to examine the reliability of the model via the consistency of the model’s responses over the visual rephrased questions generated by a small proxy Visual Question Generation (VQG) model. We implement this method by training \texttt{BLIP} \citep{li2022blipbootstrappinglanguageimagepretraining} as the VQG model with its default setting. To ensure a fair comparison, we use \llava as the VQA model, aligning with the model used in our experiments.
    \item \textbf{Length-normalized Entropy (LN-Entropy) \cite{malinin2021uncertainty}.} This approach normalizes the joint log-probability of each sequence by dividing it by the sequence length and is proposed by \cite{malinin2021uncertainty} for uncertainty quantification in LLM. Following \cite{kuhnSemanticUncertaintyLinguistic2023}, we also apply multinomial sampling instead of using an ensemble of models.
    \item \textbf{Semantic Entropy \cite{kuhnSemanticUncertaintyLinguistic2023}.} This method introduces the concept of semantic entropy, which measures the uncertainty over different meanings. We implement this method based on their proposed approach of clustering the generated sequences by \texttt{Deberta} as the text entailment model, and then computing entropy based on these clusters. 
    \item \textbf{EigenScore \cite{chenLLMsInternalStates2024}.} We follow their default settings and compute the log determinant of the covariance matrix by Eigenvalues via Singular Value Decomposition (SVD), with the exception of the jitter term value -- we found that using a jitter term of $10^{-8}$ rather than their default setting of $10^{-3}$ improves their performance, hence we applied that and reported the improved performance.  
    \item \textbf{Verbalized Confidence \cite{xiong2024llmsexpressuncertaintyempirical}.} This method is applied specifically to blackbox models where we instruct it to provide a measure of its own confidence. For a single instance, we sample generations $k$ times and return the most frequent answer along with the average reported confidence by the model.
    \item \textbf{Image generation UQ methods }. We implement PUNC  \cite{franchi2025towards} as the image generation uncertainty metric. This approach tries to generate the caption from the generated image and compute the text similarity between the new generated caption and the input caption through text similarity metrics such as ROUGE or BertScore. We use their default settings with the \llava as the caption generation Vision-Language model and ROUGE as the text similarity metric. 
\end{itemize}

\subsubsection{Experimental settings}
\begin{itemize}
    \item \textbf{Models and parameters.} We primarily use \llava as our image-text MLLM, with further analysis on other models provided in \Cref{app:models}, \modelphi as audio-text MLLM, and \llavavideo as video-text ones. Following past work \cite{kuhnSemanticUncertaintyLinguistic2023}, for each task instance $\task$, we evaluate the model accuracy $\acc(\model,\task)$ based on the model's response $\hat{y}$ generated via low-temperature sampling (\(T=0.01\)). For the computation of the various uncertainty metrics that require multiple samples, we apply Monte Carlo sampling to generate $k$ samples from the MLLM using $T=1$ and $\text{top\textunderscore p}=0.9$. In the main paper, we use the number of generated samples $k=50$, and ablation results on the impact of this hyperparameter are presented and discussed in \Cref{app:num_gen}.
     \item \textbf{Evaluation.} We evaluate a task instance accuracy $a(\model, \task)$ by comparing the model's low-temperature generated answer against its ground-truth answer. For benchmarks where there are multiple possible ground-truth answers, we evaluate $a(\model, \task)$ over all such answers and take the highest accuracy score (e.g., for exact match, we consider the model as correct as long as its generation matches one of the possible answers). Given a model prediction $\ans_{\task}$ and a reference answer $\ans_{\task}^*$, following prior work \citep{kuhnSemanticUncertaintyLinguistic2023}, we compute the accuracy score using ROUGE-L, exact match, and LLM-as-judge (GPT4o). 
    In the main paper, we report results using exact match, and conducted ablations in \Cref{app:correctness} showing that our results are robust to the choice of these evaluation functions.
    \item \textbf{Blackbox APIs.} For OpenAI's GPT models, we used $n=50$ generations per prompt. For Anthropic's Claude 3.5 Haiku model, we used the same model parameters as specified above but a smaller number of generations $n=20$ due to limitations on API credits.
    \item \textbf{Image/Audio Generation settings.} In this setting, we use NExT-GPT and AnyGPT models as the image/audio generation models, and their default configurations. To make it consistent with MLLM understanding tasks, we also use low-temperature generation for computing CLIP score, and multi-sampling with $n=10$, temperature $T=1$, and  $\text{top\textunderscore p}=0.9$, 
    We conduct experiments on 500 task instances of the MS-COCO caption validation set \citep{capeval2015} for the image generation task, and the full Audiocap test set for the audio generation task. We compare \algname to image generation uncertainty quantification method PUNC \citep{franchi2025towards}. To evaluate the performance of multimodal generation uncertainty methods, instead of introducing the in-distribution and out-of-distribution datasets and trying to let uncertainty metrics classify these sets as in \cite{franchi2025towards}, we show that these uncertainty metrics satisfy \ref{r:r2a} by computing the Pearson Correlation between the uncertainty metric and the negative of quality scores, including continuous CLIP score \citep{hessel-etal-2021-clipscore}, or CLAP score \citep{elizalde2023clap} for image or audio generation tasks, respectively. These quality scores compute the similarity between the generated image/audio and its corresponding input caption from a real image/audio.
    
\end{itemize}

\subsubsection{Prompts}
\label{app:prompts}
Following \citet{liu2023llava}, we use the following prompt for all baseline tasks:

\texttt{
<modality>. Answer this question in a word or a phrase. \{question\} 
}

The prompt used to elicit verbalized confidence from the blackbox API models are slightly different, such that they output their confidence in the answer along with the response. In accordance with \citet{xiong2024llmsexpressuncertaintyempirical}, we use the following prompt to extract verbalized confidence:

\texttt{
<modality>. Read the question, provide your answer, and your confidence in this answer. Note: The confidence indicates how likely you think your answer is true. 
Use the following format to answer:  “‘Answer and Confidence (0-100): [ONLY a word or a phrase; not a complete sentence], [Your confidence level, please only include the numerical number in the range of 0-100]\%”’
Only give me the reply according to this format, don’t give me any other words.  
Now, please answer this question and provide your confidence level.
Question: \{question\} 
}

\subsection{Evaluating TPR under FPR Constraints}
\label{app:tpr}

In addition to the AUROC metric reported in \Cref{sec:auroc}, we also provide results on the True Positive Rate (TPR) achievable for a given False Positive Rate (FPR), which users might have different minimum requirements for based on their application. As shown in \Cref{tab:tpr_results}, we provide True Positive Rate (TPR) at $10\%$ and $1\%$ FPR levels, and the results generally align with AUROC trends reported in the main text, where \algname consistently performs well compared to baselines across datasets.

\begin{table}[h]
\caption{True Positive Rates (TPR) at different False Positive Rate (FPR) thresholds for various uncertainty quantification methods across multimodal tasks. Results are reported at FPR levels of 10\% and 1\%.}
\label{tab:tpr_results}
\centering
\huge
\resizebox{\columnwidth}{!}{
\begin{tabular}{cl|ccccc|cc|c|c}
\toprule
\multirow{2}{*}{Metric} & \multicolumn{1}{c|}{\multirow{2}{*}{Method}} & \multicolumn{5}{c|}{Image} & \multicolumn{2}{c|}{Audio} & \multicolumn{1}{c|}{Video} & \multicolumn{1}{c}{\multirow{2}{*}{Avg}} \\
\cmidrule(lr){3-7} \cmidrule(lr){8-9} \cmidrule(lr){10-10}
 & & VQAv2 & OKVQA & AdVQA & MathVista & VQA-RAD & \makecell{SLUE-P2\\SQA5}  & \makecell{Spoken\\SQuAD} & \makecell{Video-MME\\short}  & \\
\midrule
\midrule
\multirow{5}{*}{\makecell{TPR@10\%\\FPR $\uparrow$}}
 & \cu     & 0.362 & 0.095 & 0.189 & 0.408 & 0.189 & - & - & - & 0.249 \\
 & \lne    & 0.282 & 0.244 & 0.168 & 0.347 & 0.127 & 0.299 & 0.248 & 0.287 & 0.250 \\
 & \se     & 0.574 & 0.327 & 0.419 & 0.437 & 0.511 & \textbf{0.557} & 0.464 & 0.311 & 0.450 \\
 & \eigen  & 0.602 & 0.340 & 0.466 & 0.483 & \textbf{0.601} & 0.443 & 0.435 & 0.534 & 0.488 \\
 & Ours    & \textbf{0.629} & \textbf{0.369} & \textbf{0.477} & \textbf{0.497} & 0.587 & 0.522 & \textbf{0.490} & \textbf{0.539} & \textbf{0.514} \\
\midrule
\multirow{5}{*}{\makecell{TPR@1\%\\FPR $\uparrow$}}
 & \cu     & 0.049 & 0.008 & 0.019 & 0.030 & 0.023 & - & - & - & 0.026 \\
 & \lne    & 0.057 & 0.030 & 0.066 & 0.075 & 0.065 & 0.025 & 0.023 & 0.022 & 0.045 \\
 & \se     & 0.177 & 0.057 & 0.125 & \textbf{0.136} & 0.286 & 0.095 & \textbf{0.169} & 0.054 & 0.137 \\
 & \eigen  & 0.215 & 0.074 & 0.171 & 0.086 & 0.304 & 0.134 & 0.118 & 0.274 & 0.172 \\
 & Ours    & \textbf{0.230} & \textbf{0.091} & \textbf{0.185} & 0.131 & \textbf{0.326} & \textbf{0.154} & 0.162 & \textbf{0.287} & \textbf{0.196} \\
\bottomrule
\end{tabular}%
}
\end{table}

\subsection{Plots for calibration \textbf{\ref{r:r2a}}}
\label{app:cpc_plot}
To better visualize the performance of the various metrics for proportionality \textbf{\ref{r:r2a}}, we plot the error rate ($\mathbb{P}[\acc(\model,\task^*)=0]$) v.s. uncertainty score $u$ on the VQA datasets in \Cref{fig:cpc_plots}. \algname manages to achieve the strongest linear correlation with error rate compared to all other metrics. This satisfies the desiderata of \textbf{\ref{r:r2a}}.

\begin{figure}[h]
    \centering
    \includegraphics[width=\linewidth]{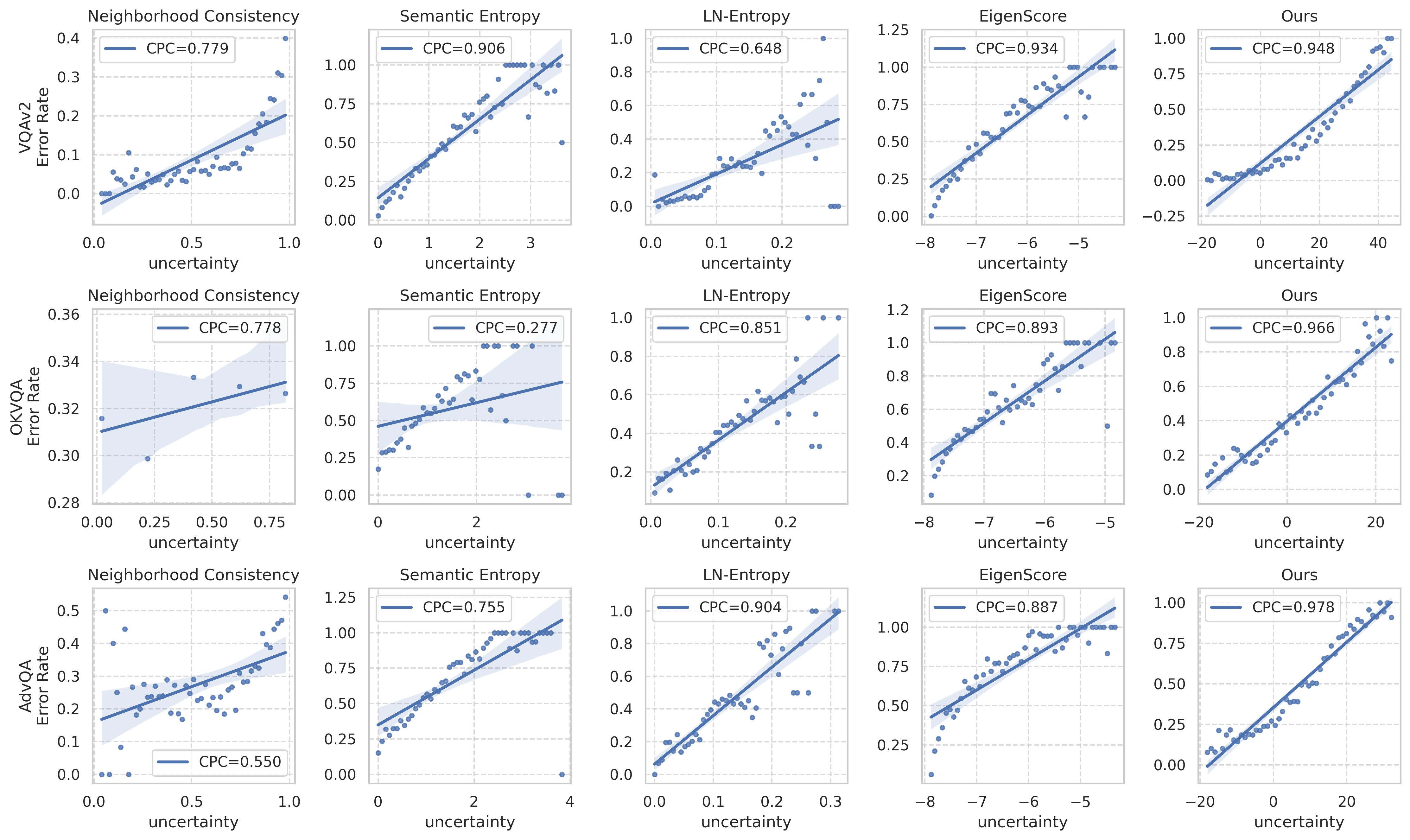}
    \caption{Pearson correlation plots of the uncertainty scores $u$ on the VQAv2, OKVQA, and AdVQA, which demonstrate \algname's strong correlation compared to other metrics.}
    \label{fig:cpc_plots}
\end{figure}

\subsection{Single sampling method}
\label{app:singlesample}

We have also run experiments on basic uncertainty metrics that use only a single MLLM response, rather than a sampled set of MLLM responses. We ran the single-sample methods listed in \citet{xiong2024efficient}: Sequence Probability, Mean Token Entropy \citep{fomicheva2020unsupervised}, and Perplexity.

\Cref{tab:single} shows these methods' results for the various MLLM datasets, along with \algname, based on five response generations. Note that while the single-sampled methods may be cheaper to compute, they also produce significantly worse performance results compared to \algname with $k=5$. The appropriate metric to use would depend on the application requirements. For settings that require better uncertainty metric performance, \algname would likely be a good choice especially since accelerated batched response generation \citep{kwonEfficientMemoryManagement2023} is fast and typically not a computational resource bottleneck, while single-sample methods may be more suitable for very time-sensitive applications.

\subsection{Statistical Tests for AUROC (\ref{r:classify}), CPC (\ref{r:r2a}), ECE (\ref{r:r2b}), and AURAC results}
\label{app:statistical-t-test}
To determine whether the difference in performance between \algname and other baselines is statistically significant, we performed t-tests and Wilcoxon signed-rank tests (non-parametric) to ensure the results agree.

\begin{itemize}
    \item \textbf{$H_0$}: $X_{\algname} - X_{other} = 0$
    \item \textbf{$H_1$}: $X_{\algname} - X_{other} > 0$
\end{itemize}

where $X_{\algname}$ is the set of scores from various metrics using \algname and $X_{other}$ is the set of metric scores from baseline methods, including length-normalized entropy, Eigenscore and semantic entropy. From \Cref{tab:statistical-tests}, both the t-test and the Wilcoxon sign-rank test agree, indicating that the advantage of \algname over other baselines in AURAC, AUROC, CPC and ECE between \algname is statistically significant.

\begin{table}[h]
\caption{Statistical tests calculated based on the results from models: Llava-13B, Llava-7B and Mllama-11B with datasets: VQAv2, AdVQA and OKVQA. The p-value of both tests are $\ll$ 0.01, thus there is sufficient evidence at the 1\% level of significance to conclude that $X_{\algname} - X_{other} > 0$ for the metrics AURAC, CPC, AUROC, and $X_{\algname} - X_{other} < 0$ for ECE.}
\label{tab:statistical-tests}
\centering
\begin{tabular}{l|c|c|c}
\toprule
 & \lne & EigenScore & SE \\
\midrule

\textbf{AUROC (\ref{r:classify})} & & & \\
t-test p-value        
& $1.03 \times 10^{-5}$ 
& $2.25 \times 10^{-8}$ 
& $1.94 \times 10^{-6}$ \\
wilcoxon test p-value 
& $1.95 \times 10^{-3}$ 
& $1.95 \times 10^{-3}$ 
& $1.95 \times 10^{-3}$ \\
\midrule

\textbf{CPC (\ref{r:r2a})} & & & \\
t-test p-value        
& $4.50 \times 10^{-4}$ 
& $9.15 \times 10^{-4}$ 
& $1.06 \times 10^{-2}$ \\
wilcoxon test p-value 
& $1.95 \times 10^{-3}$ 
& $1.95 \times 10^{-3}$ 
& $1.95 \times 10^{-3}$ \\
\midrule

\textbf{ECE (\ref{r:r2b})} & & & \\
t-test p-value        
& $5.91 \times 10^{-3}$ 
& $5.78 \times 10^{-4}$ 
& $2.78 \times 10^{-3}$ \\
wilcoxon test p-value 
& $3.91 \times 10^{-3}$ 
& $1.95 \times 10^{-3}$ 
& $3.91 \times 10^{-3}$ \\

\midrule

\textbf{AURAC} & & & \\
t-test p-value        
& $2.05 \times 10^{-4}$ 
& $3.96 \times 10^{-4}$ 
& $1.71 \times 10^{-5}$ \\
wilcoxon test p-value 
& $1.95 \times 10^{-3}$ 
& $1.95 \times 10^{-3}$ 
& $1.95 \times 10^{-3}$ \\

\bottomrule
\end{tabular}

\end{table}

\subsection{Qualitative Results}
\label{app:qualitative}

We provide some qualitative examples to provide better intuition on how both terms, representing the semantic volume and incoherence scores respectively, are important in \algname. We analyze two distinct failure modes where considering only a single term is sub-optimal: one where considering only semantic diversity fails due to low variance, and another where considering only model probabilities fail due to overconfidence. 

In the first scenario (\Cref{fig:qualitative_U_fail}), the model incorrectly answers "10" to a counting question. Because the sampled hallucinations (e.g., "6", "7", "8") are all digits, they cluster tightly in the embedding space. Consequently, $\unadjmethod$ yield low values, falsely signaling that the model is certain. However, the probability distribution across these tokens is flat. $\quadent$ successfully captures this low predictive confidence, driving the total \algname score up. We further plot the 2D convex hulls of the PCA-reduced embeddings to illustrate how the volume increase -- from $\unadjmethod$ to \algname, which is scaled with incoherence scores.

In the second scenario (\Cref{fig:qualitative_Q_fail}), when asked about a "No Littering" sign, the model incorrectly predicts "Parking" with high probability, resulting in a low $\quadent$ that falsely classifies the prediction as safe. However, the sampled responses, ranging from "Traffic" to "Dump" and "Truck", are semantically distinct. Unlike the clustered digits in the first example, these hallucinations create a large dispersion in the embedding space. The unadjusted semantic volume ($\unadjmethod$) effectively captures this disagreement, penalizing the top-token overconfidence that $\quadent$ missed. 

Together, these examples confirm that $\unadjmethod$ and $\quadent$ cover mutually exclusive blind spots, ensuring robust uncertainty quantification across diverse error types.

\begin{figure}
    \centering
    \includegraphics[width=0.8\linewidth]{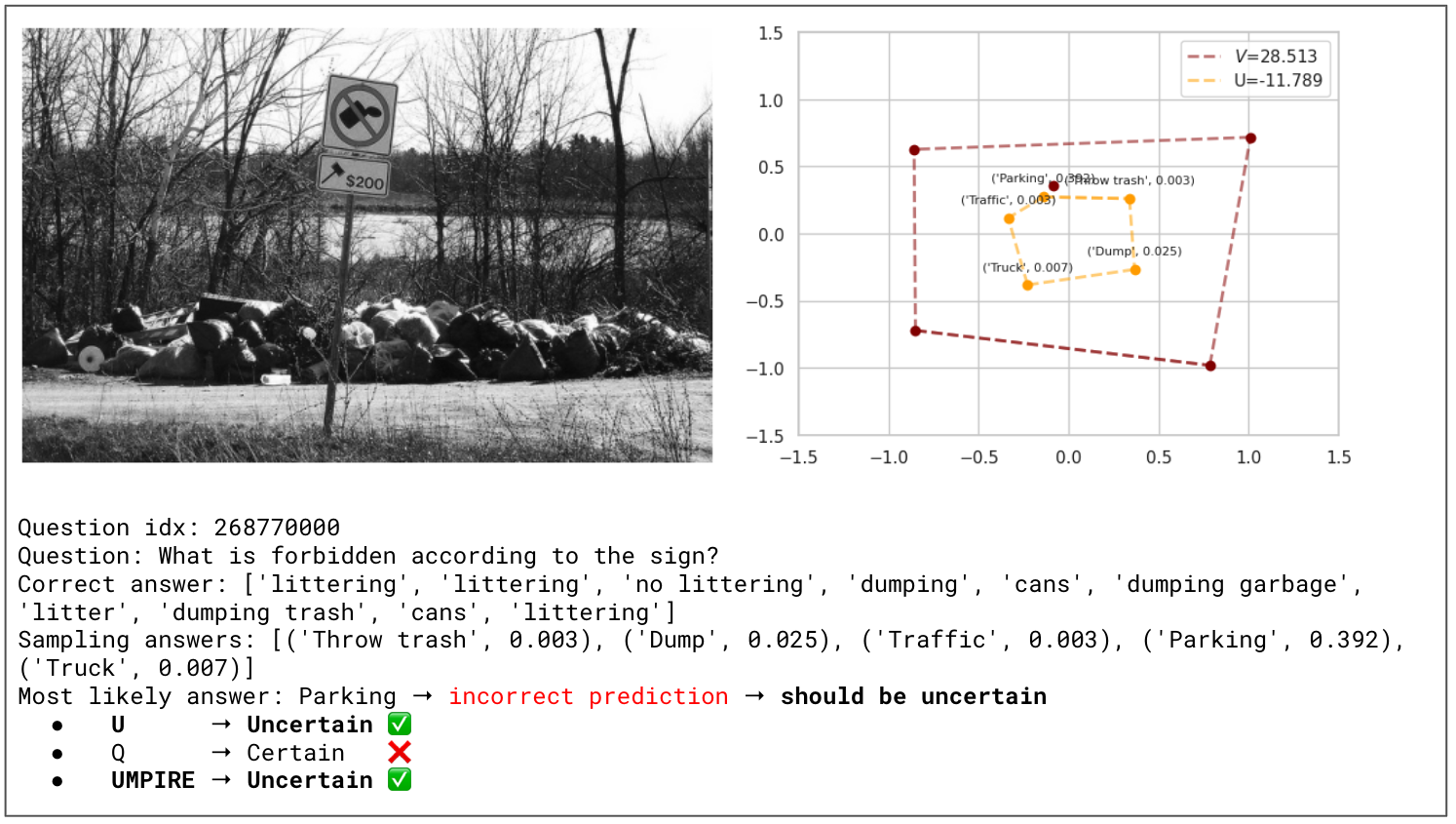}
    \caption{Qualitative example of $\unadjmethod$ correcting model overconfidence. The model incorrectly predicts "Parking" with high confidence, causing the Incoherence score ($\quadent$) to fail (predicting "Certain"). However, because the sampled responses ("Traffic", "Dump", "Truck") are semantically scattered, the high semantic volume ($\unadjmethod$) correctly identifies the confusion, allowing \algname to successfully detect the error.}
    \label{fig:qualitative_Q_fail}
\end{figure}

\begin{figure}
    \centering
    \includegraphics[width=0.8\linewidth]{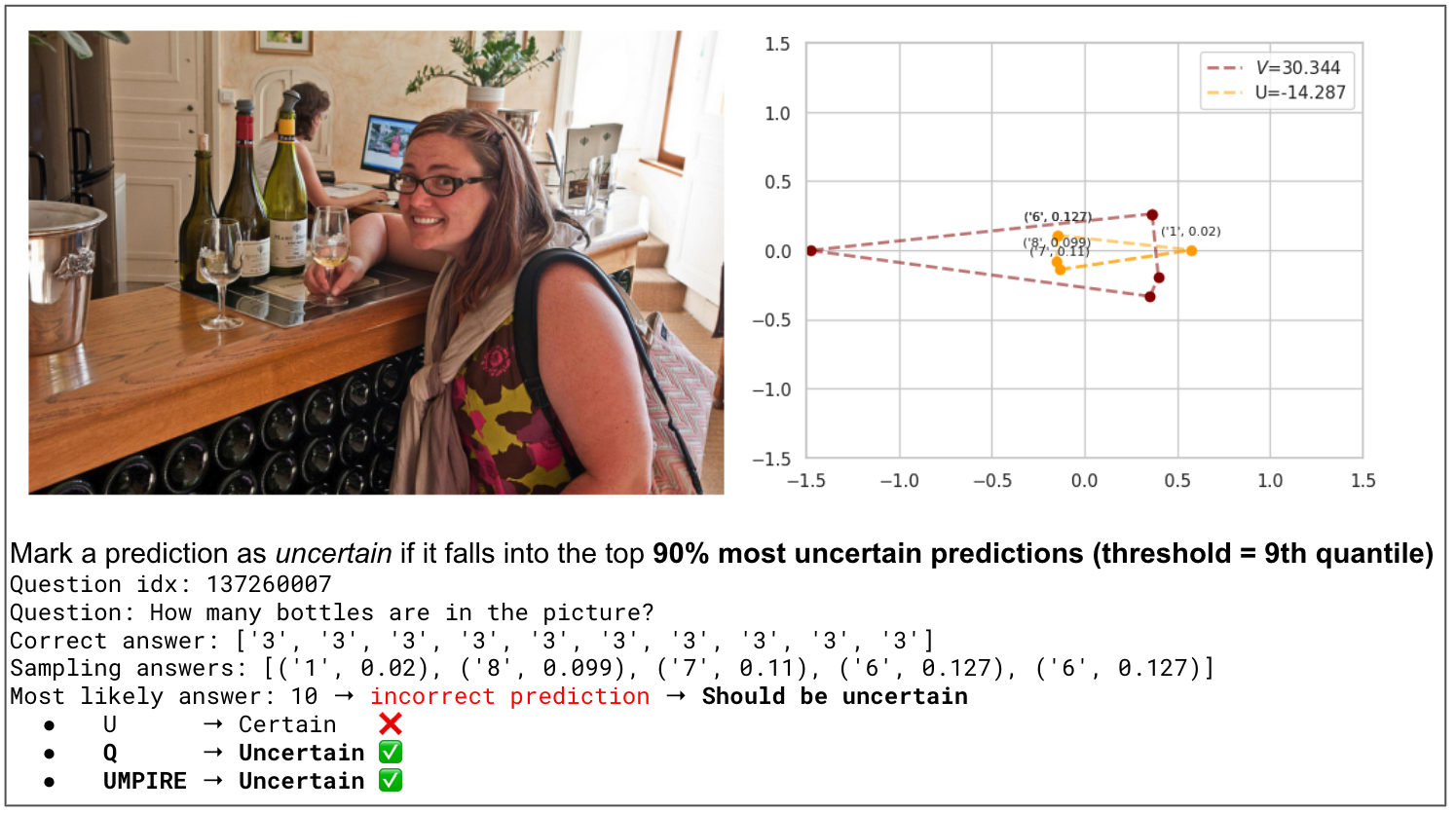}
    \caption{Qualitative example of \algname's robustness on tasks with low semantic variance (e.g., yes/no, counting). While the clustered embeddings of the sampled answers (digits) fool $\unadjmethod$ into predicting certainty, \algname uses the high Incoherence score ($\quadent$) to correctly flag the prediction as uncertain.}
    \label{fig:qualitative_U_fail}
\end{figure}

\begin{table}[h]
\caption{Comparison of the performance of single sampling methods and \algname across various VQA datasets.}
\label{tab:single}
\centering
\begin{tabular}{c|lccc}
\toprule
Dataset & Method & AUROC $\uparrow$ & ECE $\downarrow$ & CPC $\uparrow$ \\
\midrule

\multirow{4}{*}{VQAv2}
& Seq Prob & 0.632 & 0.121 & 0.374 \\
& Mean Token Entropy & 0.628 & 0.129 & 0.046 \\
& Perplexity & 0.629 & 0.131 & 0.125 \\
& Ours (k=5) & \textbf{0.873} & \textbf{0.067} & \textbf{0.923} \\

\midrule
\multirow{4}{*}{ADVQA}
& Seq Prob & 0.595 & 0.303 & 0.372 \\
& Mean Token Entropy & 0.590 & 0.302 & 0.170 \\
& Perplexity & 0.592 & 0.336 & 0.151 \\
& Ours (k=5) & \textbf{0.774} & \textbf{0.055} & \textbf{0.959} \\

\midrule
\multirow{4}{*}{OKVQA}
& Seq Prob & 0.581 & 0.304 & 0.463 \\
& Mean Token Entropy & 0.580 & 0.303 & 0.039 \\
& Perplexity & 0.581 & 0.335 & 0.225 \\
& Ours (k=5) & \textbf{0.740} & \textbf{0.097} & \textbf{0.944} \\

\midrule
\multirow{4}{*}{MathVista}
& SingleProb & 0.628 & 0.539 & 0.322 \\
& Mean Token Entropy & 0.606 & 0.601 & 0.334 \\
& Perplexity & 0.616 & 0.643 & 0.224 \\
& Ours (k=5) & \textbf{0.791} & \textbf{0.087} & \textbf{0.706} \\

\midrule
\multirow{4}{*}{VQA-RAD}
& Seq Prob & 0.540 & 0.525 & 0.140 \\
& Mean Token Entropy & 0.535 & 0.534 & 0.118 \\
& Perplexity & 0.537 & 0.550 & 0.168 \\
& Ours (k=5) & \textbf{0.806} & \textbf{0.090} & \textbf{0.828} \\

\bottomrule
\end{tabular}

\end{table}

\section{Ablation studies}
\label{app:ablation_study}

\subsection{Embedding layer selection}
\label{app:layers}
We analyzed the impact of the layer index when extracting the embedding vectors by computing the AUROC performance on different embedding matrices extracted from different layer indices, all computed on the first 3000 samples of the VQAv2 validation set. As shown in \Cref{fig:layer_index}, the change in which MLLM layer to use makes the AUROC performance vary only slightly. The last layer still yields the best performance, so we adopt it for all of our experiments.

\begin{figure}[h]
    \centering
    \includegraphics[width=0.5\columnwidth]{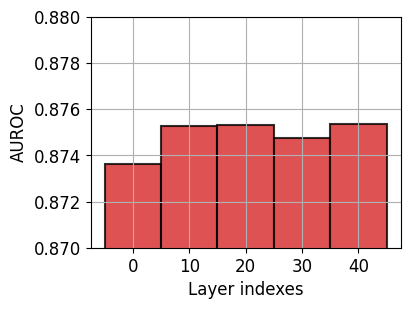}
    \caption{Ablation study on choosing the layer index to extract embedding vectors. Results show that different layer indices only have slight variations in the AUROC performance. 
    }
    \label{fig:layer_index}
\end{figure}

\subsection{Number of generations analysis}
\label{app:num_gen}

To analyze the impact of the number of generations on the various metrics' performance, we conduct an ablation study by varying the number of generated responses (from 2 to 50) per task instance for the first 3000 samples of the VQAv2 validation set. As shown in \Cref{fig:run_time_num_gen}(b), increasing the number of generations generally improves AUROC across all methods, and \algname achieves higher performance with significantly fewer generations compared to baselines. This indicates that our method is more efficient, requiring fewer samples to reach strong performance, whereas other methods continue to rely on additional generations for improvement. The results highlight the robustness of our approach in capturing correctness signals effectively, even with a limited number of generations.

\subsection{Ablation on evaluation parameters}
\label{app:eval_ablation}

\begin{figure}[h]
    \centering
    \includegraphics[width=\columnwidth]{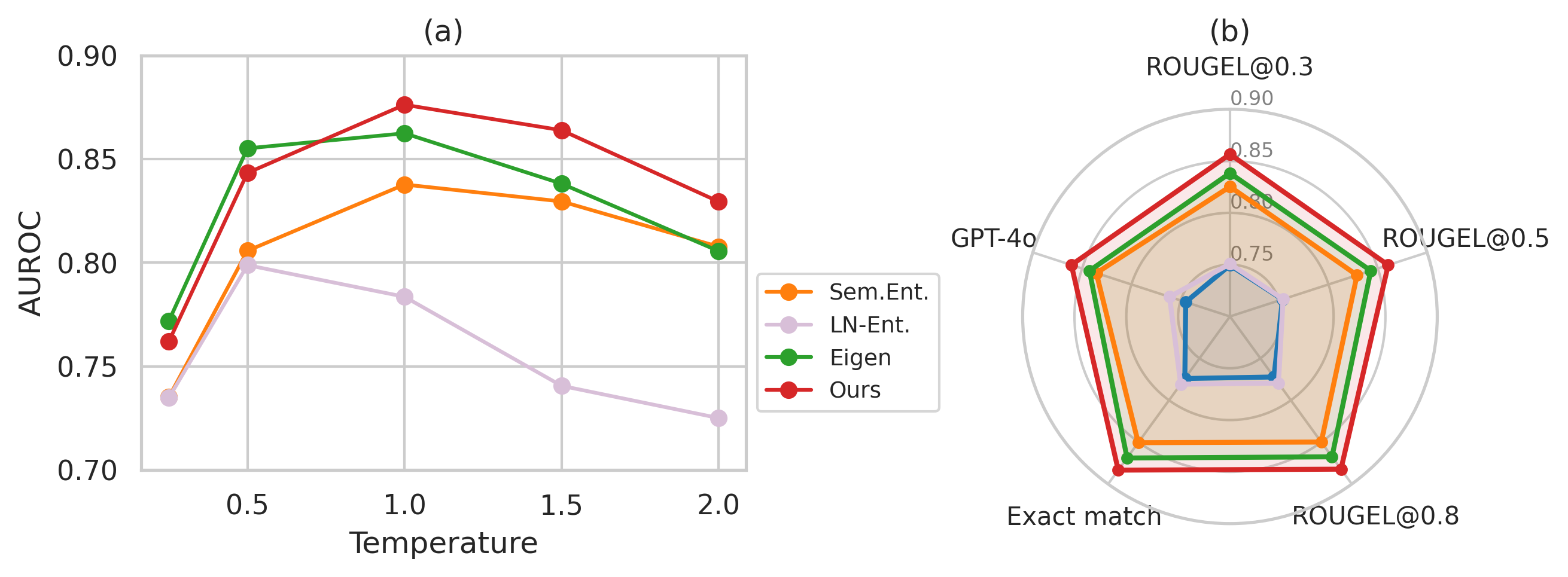}
    \caption{Ablation study on the (a) Impact of temperature during the generation process on evaluation performance. (b) Evaluation methods. \algname consistently outperforms baseline approaches regardless of the chosen evaluation functions. }
    \label{fig:temp_evaluator}
\end{figure}

\paragraph{Evaluation function $\acc(\model,\task^*)$} \label{app:correctness}Following the setting in \cite{kuhnSemanticUncertaintyLinguistic2023}, we further evaluate the performance of our method and baselines
under exact match, various levels of the ROUGE-L and LLM-as-judge with GPT4o. \Cref{fig:temp_evaluator}(b) presents the AUROC scores across different evaluation functions $\acc(\model,\task^*)$ on the first 3000 samples of the VQAv2 validation set, demonstrating that our method consistently outperforms baseline approaches regardless of the chosen evaluation functions. 
These results highlight the versatility and robustness of our approach across different correctness evaluation criteria.

\begin{figure}
    \centering
    \includegraphics[width=\linewidth]{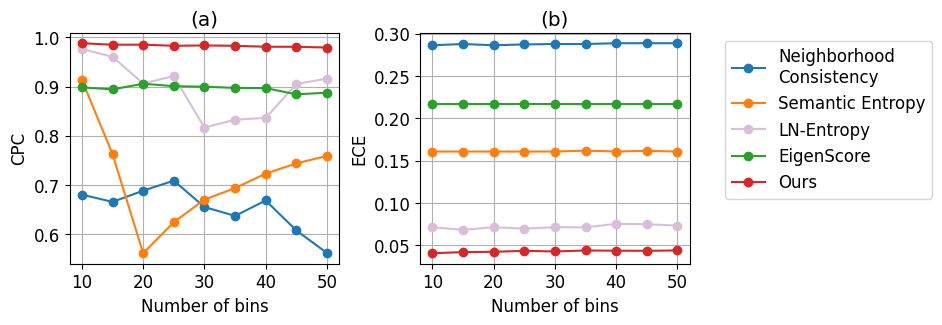}
    \caption{Results for the effect of number of bins on (a) CPC and (b) ECE. Both measures show that \algname consistently outperforms baselines.}
    \label{fig:evaluator_and_num_bin}
\end{figure}

\paragraph{Effect of number of bins in ECE and CPC.} \label{app:num_bins} In the main paper, we mainly use the number of bins as $50$ for CPC and $15$ for ECE. We further analyzed the effect of the number of bins when computing ECE and CPC by evaluating the first 3000 samples of the AdVQA dataset. \Cref{fig:evaluator_and_num_bin} illustrates that \algname still achieves the best and consistent performance across all bin value settings.

\subsection{Sampling temperature}
\label{app:sampling}

Besides the number of generations in \Cref{app:num_gen}, we analyzed the impact of temperature during the generation process on the evaluation performance. We conducted an ablation study by varying the generation temperature (from 0.25 to 2) on the first 3000 samples of the VQAv2 validation set. Note that the temperature used for sampling responses to compute $\metric(\model, \task)$ need not be the same as that used to generate MLLMs' final output $\hat{\ans}$ to task instances. Higher temperatures settings could be used to sample from the MLLM for assessing uncertainty, while the final MLLM output could be obtained low finite temperature sampling. As shown in \Cref{fig:temp_evaluator}, most baselines tend to perform optimally around a sampling temperature of 1, with \algname outperforming the best performance of other baselines.

\subsection{Model sizes and families analysis}
\label{app:models}
We analyze the impact of model size and architecture family on evaluation performance by comparing different models across various sizes and families on the first 3000 samples of the VQAv2 validation set for the image-text understanding task and the SLUE-P2-SQA5 test set for the audio-text ones. As shown in \Cref{fig:model_abl}, we observe a slight increase in AUROC as the model size increases within the same family. This suggests that larger models tend to generate more informative and reliable outputs. Additionally, our method show a strong performance AUROC across all tested models, demonstrating its robustness regardless of model size or architecture. These findings highlight that while larger models can enhance performance, our approach remains effective across different model scales and families.

\begin{figure}[H]
    \centering
    \includegraphics[width=0.8\columnwidth]{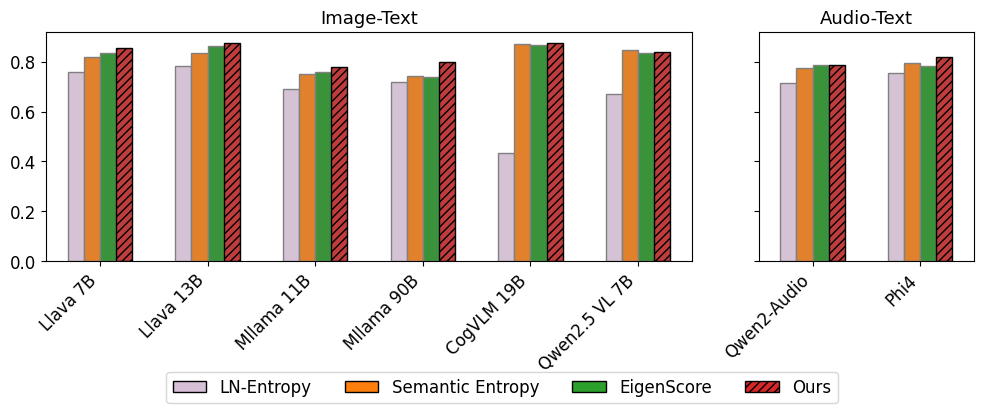}
    \caption{Ablation study across different models on image-text (VQAv2) and audio-text (SLUE-P2-SQA5) understanding datasets, evaluating AUROC performance for \lne, \se, \eigen, and \algname. The results indicate that \algname consistently achieves strong AUROC across various models, including \texttt{Llava-v1.5-7b, Llava-v1.5-13b, Llama-3.2-11B-Vision, Llama-3.2-90B-Vision}, \texttt{cogvlm2-llama3-chat-19B}, \texttt{Qwen2.5-VL-7B-Instruct} for image-text, and \texttt{Qwen2-Audio-7B}, \texttt{Phi4} for audio-text. This highlights the robustness and effectiveness of our approach across different model architectures.}
    \label{fig:model_abl}
\end{figure}

\subsection{Results of other blackbox and whitebox proxy models in blackbox settings.}
\label{app:blackbox}
\subsubsection{Blackbox models}
\begin{figure*}[h]
    \centering
    \includegraphics[width=\linewidth]{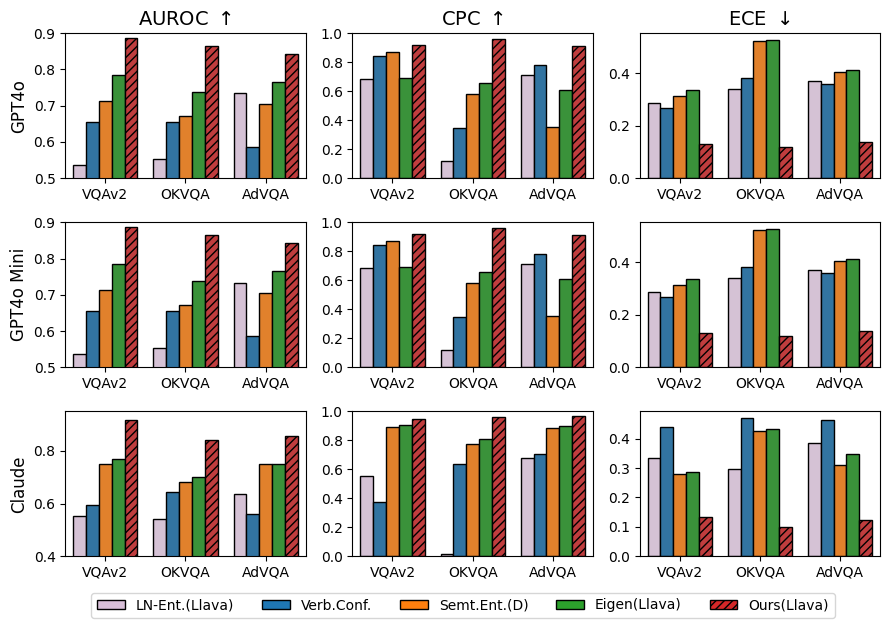}
    \caption{\algname metric consistently outperforms other baselines across various black-box models, including GPT4o, GPT4o-mini, and Claude 3.5 Haiku.}
    \label{fig:blackbox_all}
\end{figure*}

As in \Cref{fig:blackbox_all}, we find that \algname also outperforms other baselines using different black-box models, including Claude 3.5 Haiku \citep{TheC3}, GPT4o-mini \citep{openai2024gpt4omini}.

\subsubsection{White-box proxy models}
The experiments in \Cref{sec:blackbox} use a simple approach of applying a vanilla \llava proxy model for all blackbox models. As seen in our empirical results \Cref{fig:blackbox}, \algname consistently outperforms baselines without any fine-tuning of the proxy model. In general, we observe that performant models tend to produce similar semantic volume, while variations in incoherence scores introduce noise, they do not have a significant adverse impact on overall performance. The table \Cref{tab:diff_proxy} shows new ablation results where using different whitebox proxy models for GPT-4o still yield good performance on the first 3000 samples of the VQAv2 validation set. 
\begin{table*}[h]
\caption{Results of \algname in blackbox settings with different proxy models.}
\label{tab:diff_proxy}
\centering
\begin{tabular}{l|ccc}
\toprule
\textbf{Method} & \textbf{AUROC $\uparrow$} & \textbf{CPC $\uparrow$} & \textbf{ECE $\downarrow$} \\
\midrule
\midrule
\algname (\texttt{Llava-v1.5-13b})   & 0.890 & 0.904 & 0.094 \\
\algname (\texttt{Llava-v1.5-7b})    & 0.893 & 0.900 & 0.087 \\
\algname (\texttt{Llama-3.2-11B-Vision})  & 0.839 & 0.943 & 0.091 \\
\bottomrule
\end{tabular}

\end{table*}

\subsection{Length-normalized effect}
\label{sec:ln_prob}
In prior work on uncertainty estimation and related scoring functions, length normalization has often been applied to adjust for biases introduced by varying response lengths \citep{kuhnSemanticUncertaintyLinguistic2023, malinin2021uncertainty} and handle small output probability values for long token generations. Motivated by this, we explored whether applying token probability length normalization to the incoherence term would significantly affect the \algname metric performance. Empirically, as shown in \Cref{tab:incoherence-ln}, we observed that applying length normalization does not cause major variations in performance compared to the non-length normalized version of \algname. Depending on the task and benchmark being evaluated on, length normalization might yield minor improvements or degradation of performance across AUROC, CPC ECE, with no consistent trend.
Generally, we suggest using un-normalized probabilities for tasks with short answer spans, and length-normalized probabilities for tasks involving longer answers to avoid numerical underflow issues.

\begin{table}[h]
\caption{Comparison of \algname with and without length normalization across various VQA datasets.}
\label{tab:incoherence-ln}
\centering
\resizebox{\columnwidth}{!}{%
\begin{tabular}{llccccc}
\toprule
Metric & Method & VQAv2 & AdVQA & OKVQA & MathVista & VQA-Rad \\
\midrule
\midrule
\multirow{2}{*}{AUROC ↑} & Without Length Normalized & \textbf{0.882} & \textbf{0.787} & 0.755 & 0.822 & \textbf{0.802} \\
                         & Length Normalized         & 0.875          & 0.779          & \textbf{0.756} & \textbf{0.825} & 0.792 \\
\midrule
\multirow{2}{*}{CPC ↑}   & Without Length Normalized & 0.946          & \textbf{0.979} & \textbf{0.966} & \textbf{0.945} & 0.908 \\
                         & Length Normalized         & \textbf{0.986} & 0.978          & 0.946          & 0.936          & \textbf{0.935} \\
\midrule
\multirow{2}{*}{ECE ↓}   & Without Length Normalized & \textbf{0.038} & 0.042          & 0.036          & 0.071          & \textbf{0.067} \\
                         & Length Normalized         & 0.062          & \textbf{0.019} & \textbf{0.034} & \textbf{0.056} & 0.068 \\
\bottomrule
\end{tabular}%
}

\end{table}

\end{document}